\newtheorem{theorem}{Theorem}
\newtheorem{assumption}{Assumption}
\newtheorem{proposition}{Proposition}
\newtheorem{corollary}{Corollary}
\newcommand{\ModelName}{\textsc{MRGC}}
\title{Robust Graph Condensation via\\Classification Complexity Mitigation}
\author{%
      {Jiayi Luo$^{1}$, Qingyun Sun$^{1}$, Beining Yang$^{2}$,}\\ {\textbf{Haonan Yuan}$^{1}$\textbf{,} \textbf{Xingcheng Fu}$^{3}$\textbf{,} \textbf{Yanbiao Ma}$^{4}$\textbf{,} \textbf{Jianxin Li}$^{1}$\thanks{Corresponding author.}\textbf{,} \textbf{Philip S. Yu}$^{5}$} \\
  $^{1}$SKLCCSE, School of Computer Science and Engineering, Beihang University\\
  $^{2}$Laboratory for Foundations of Computer Science, University of Edinburgh\\  
  $^{3}$Key Lab of Education Blockchain and Intelligent Technology, Guangxi Normal University\\
  $^{4}$Gaoling School of Artificial Intelligence, Renmin University of China\\
  $^{5}$Department of Computer Science, University of Illinois, Chicago\\
  \texttt{\{luojy,sunqy,yuanhn,lijx\}@buaa.edu.cn} \\
  \texttt{b.yang-32@sms.ed.ac.uk,fuxc@gxnu.edu.cn,ybma1998@ruc.edu.cn,psyu@uic.edu}\\
}
\begin{document}

\maketitle

\begin{abstract}
Graph condensation (GC) has gained significant attention for its ability to synthesize smaller yet informative graphs. 
However, existing studies often overlook the robustness of GC in scenarios where the original graph is corrupted. In such cases, we observe that the performance of GC deteriorates significantly, while existing robust graph learning technologies offer only limited effectiveness.
Through both empirical investigation and theoretical analysis, we reveal that GC is inherently an intrinsic-dimension-reducing process, synthesizing a condensed graph with lower classification complexity. Although this property is critical for effective GC performance, it remains highly vulnerable to adversarial perturbations.
To tackle this vulnerability and improve GC robustness, we adopt the geometry perspective of graph data manifold and propose a novel \underline{\textbf{M}}anifold-constrained \underline{\textbf{R}}obust \underline{\textbf{G}}raph \underline{\textbf{C}}ondensation framework named \underline{\textbf{\ModelName}}. 
Specifically, we introduce three graph data manifold learning modules that guide the condensed graph to lie within a smooth, low-dimensional manifold with minimal class ambiguity, thereby preserving the classification complexity reduction capability of GC and ensuring robust performance under universal adversarial attacks. 
Extensive experiments demonstrate the robustness of \ModelName\ across diverse attack scenarios.
\end{abstract}

\section{Introduction}
\label{sec_intro}
Recently, Graph Condensation (GC)~\cite{gao2024graph,xu2024survey} has emerged as a promising approach to enhance the training efficiency of Graph Neural Networks (GNNs) by condensing large graphs into smaller ones. 
These smaller yet highly informative synthesized graphs enable GNNs trained on them to achieve performance comparable to models trained on larger original graphs~\cite{gong2024gc4nc,liu2024gcondenser,sun2024gc}. This has facilitated the adoption of GC in areas like neural architecture search~\cite{jin2021graph} and graph continual learning~\cite{liu2024puma}.


However, the quality of the condensed graph largely depends on the original graph while existing GC methods assume a clean original graph. As shown in Figure~\ref{fig_intro_attack}, when the original graph is attacked, the quality of the condensed graph deteriorates, adversely impacting the applications of GC in real-world scenarios where noise and attackers are prevalent~\cite{sun2022adversarial}.
Nevertheless, GC robustness remains largely unexplored. RobGC~\cite{gao2024robgc} is the first attempt to tackle this issue. 
While RobGC effectively addresses structure attacks, its dependence on structure learning and label propagation limits its defense against feature and label attacks~\cite{gao2024graph}.
Benchmark~\cite{gong2024gc4nc} also reveals that GC is vulnerable to feature attacks. 
Three key questions about GC robustness remain unsolved: 
(\textit{Q1}) Can existing robust graph learning techniques improve GC robustness?
(\textit{Q2}) What key property of GC is disrupted by attacks, causing the performance degradation, and can it be theoretically understood?
(\textit{Q3}) How to design a defense strategy to counter universal attacks in GC?

\textit{Answer to Q1}: Given that most current training-based GC methods utilize GNN as their backbone~\cite{gong2024gc4nc}, one intuitive approach to enhance GC robustness is to leverage existing robust GNN technologies. 
To investigate this possibility, we conduct two toy cases using GCond~\cite{jin2021graph} as a representative GC method. First, we integrate MedianGCN~\cite{chen2021understanding} (a classic robust GNN~\cite{dai2024comprehensive,zhang2021subgraph}) as the GC backbone. Second, we train MedianGCN on the condensed graph synthesized by the standard GCond.
As shown in Fig.~\ref{fig_intro_robust}, both strategies fail to work effectively, with even worse performance than the standard GC method. This may be because most existing robust GNNs enhance robustness using attention-like mechanisms~\cite{dai2024comprehensive}, which have been shown to perform poorly even in clean GC scenarios~\cite{gong2024gc4nc,sun2024gc,liu2024gcondenser}.  (Results with more robust GNNs are in Appendix~\ref{apendix_exp}).
This result suggests that existing robust GNN techniques may fail to enhance GC robustness, highlighting the need for an innovative solution.

\begin{figure*}[!t]
    \centering
    \begin{minipage}[t]{0.24\textwidth}
        \centering
        \setlength{\subfigcapskip}{-0.2em}
        \subfigure[GCond under attacks.]{\includegraphics[width=\textwidth]{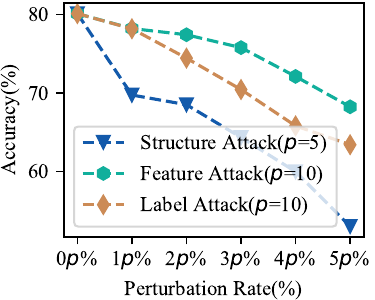}\label{fig_intro_attack}
        }
    \end{minipage}
    \begin{minipage}[t]{0.24\textwidth}
        \centering
        \setlength{\subfigcapskip}{-0.2em}
        \subfigure[GC with robust GNN.]{
            \includegraphics[width=\textwidth]{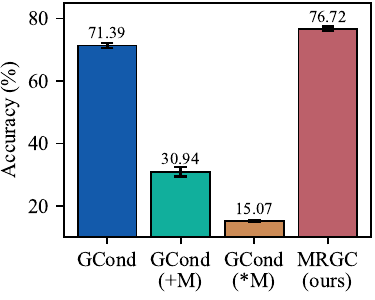}\label{fig_intro_robust}
        }
    \end{minipage}
    \begin{minipage}[t]{0.48\textwidth}
        \centering
        \setlength{\subfigcapskip}{-0.2em}
        \subfigure[Classification complexity evaluation.]{
            \includegraphics[width=\textwidth]{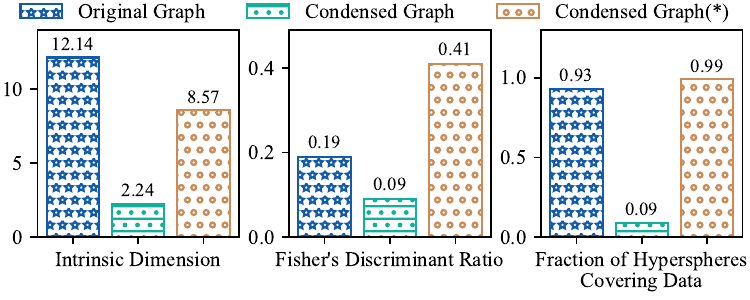}\label{fig_intro_complexity}
        }
    \end{minipage}
    \caption{Examples in Cora~\protect\cite{chen2021understanding} dataset (ratio=2.6\%): (a) GCond performance under attacks. (b) The robust GNN is adapted for GC under attacks, with (+M) indicating MedianGCN is the GC backbone and (*M) denoting its use in the condensed graph. (c) Classification complexity evaluation, where (*) means the original graph is under attack during GC. Details can be found in Appendix~\ref{apendix_exp}.}
    \vspace{-1em}
\end{figure*}

\textit{Answer to Q2}: 
Since existing GC research primarily focuses on the classification task~\cite{gao2024graph}, we examine how attacks affect the classification-related properties of GC.
Inspired by the classification complexity theory~\cite{ho2002complexity, lorena2019complex, li2008introduction}, which investigates classification problems through the geometric properties of classes with the three key factors are intrinsic dimension, boundary complexity, and class ambiguity~\cite{lorena2019complex}, we explore how the GC process influences the classification complexity of graphs and its behavior under attack through the lens of this theory.
To evaluate the classification complexity of graphs during the GC process, we employ three widely used metrics~\cite{lorena2019complex}: Intrinsic Dimension, Fisher’s Discriminant Ratio, and the Fraction of Hyperspheres Covering Data (Details are in Appendix~\ref{apendix_exp}).
As shown in Figure~\ref{fig_intro_complexity}, all metrics decrease after GC with an average reduction of 89.25\%, indicating that GC will reduce the classification complexity.
However, when adversarial attacks occur, we observe an average increase of 547.54\% across all metrics in the condensed graph. This reveals an interesting insight: while GC reduces the classification complexity, adversarial perturbations counteract this classification-complexity-reducing property.

\textit{Answer to Q3}: 
Based on our analysis, designing a defense strategy to preserve the key property of GC under universal attacks to improve robustness presents three challenges: 
\textit{How to \textit{(1)} reduce the intrinsic dimension, \textit{(2)} minimize the complexity of class boundaries, and \textit{(3)} resolve class ambiguity to mitigate the increasing classification complexity in the condensed graph under attacks?}
To address these, we explore GC robustness from the geometric perspective of graph data manifolds and propose a novel \underline{\textbf{M}}anifold-constrained \underline{\textbf{R}}obust \underline{\textbf{G}}raph \underline{\textbf{C}}ondensation framework, named \textbf{\ModelName}.
To maintain a low intrinsic dimension of the condensed graph \textit{(Challenge 1)}, 
we designed an Intrinsic Dimension Manifold Regularization Module to constrain the condensed graph in a low-dimensional manifold. 
To reduce the complexity of class boundaries \textit{(Challenge 2)}, we introduce a Curvature-Aware Manifold Smoothing Module to smooth the class manifold in the condensed graph, thereby simplifying the class boundaries. 
To relieve class ambiguity \textit{(Challenge 3)}, we develop a Class-Wise Manifold Decoupling Module that mitigates potential class bias by minimizing the overlap between class manifolds.
Our main contributions are summarized as follows:
\begin{itemize}[leftmargin=*]
\setlength{\itemsep}{0pt}
\setlength{\parsep}{1.5pt}
\setlength{\parskip}{1.5pt}
    \item We empirically and theoretically demonstrate that GC inherently reduces the classification complexity of graphs, a property vulnerable to adversarial attacks targeting the original graph and remains unprotected by existing robust graph learning techniques.
    \item We adopt a geometric perspective of graph data manifold and propose \ModelName\  to enhance GC robustness by protecting its classification complexity reduction property.
    \item To the best of our knowledge, we present the first study of the robustness of GC under conditions where features, structure, and labels can all be corrupted. Extensive experiments demonstrate the superior robustness of our proposed \ModelName.
\end{itemize}

\section{Related Work}
\label{sec_related}


\noindent\textbf{Graph Condensation.}
GC significantly enhances the training efficiency and scalability of GNNs~\cite{jin2021graph}.
Existing GC methods can be categorized into four types~\cite{sun2024gc,liu2024gcondenser}:
\textit{(1) Gradient Matching:} GCond~\cite{jin2021graph} serves as a representative framework for these methods, optimizing the condensed graph by minimizing gradient discrepancies between GNNs trained on the original and condensed graphs~\cite{jin2022condensing,yang2023does,gao2023multiple}.
\textit{(2) Trajectory Matching:} SFGC~\cite{zheng2024structure} and GEOM~\cite{zhang2024navigating} condense graphs by aligning the training trajectories of parameter distributions in expert GNNs. 
\textit{(3) Distribution Matching:} These methods minimize the distributional difference between the original and the condensed graphs~\cite{liu2023dream,liu2023graph}.
\textit{(4) Others:} Various methods, such as Kernel Ridge Regression~\cite{xu2023kernel,yangst} and Computation Tree~\cite{gupta2023mirage,gupta2024bonsai}, are also used for GC.
However, few studies have explored GC robustness under attack.

\noindent\textbf{Robust Graph Neural Network.}
Lines of studies have been dedicated to enhancing GNN robustness:
\textit{(1) Preprocessing:} These methods leverage certain shared properties of real-world graphs to clean perturbed ones before training~\cite{entezari2020all,wu2019adversarial,jin2021node,luo2025robust}.
\textit{(2) Modeling:} New GNN architectures are proposed to mitigate the impact of attacks dynamically during training~\cite{chen2021understanding,jin2020graph,geisler2020reliable,yuan2025dg}.
\textit{(3) Training:} They don’t modify the GNN architecture but use training strategies like adversarial training~\cite{gosch2024adversarial} or group training~\cite{in2024self,yuan2023environment} to reduce GNN sensitivity.

\noindent\textbf{Robust Graph Condensation.}
With GC recognized as a promising technique~\cite{gao2024graph,xu2024survey}, 
RobGC~\cite{gao2024robgc} is the first to investigate GC robustness and propose a defense against structure attacks. Benchmark~\cite{gong2024gc4nc} reveals that feature noise poses great threats in GC. However, a comprehensive understanding of GC robustness and a universal defense against structure, feature, and label attacks remains absent.

\section{Method}
\textbf{Problem Formulation.} Consider a poisoned graph $\hat{\mathcal{G}}\!\! = \!\!\{\hat{\mathbf{X}}, \hat{\mathbf{A}}, \hat{\mathbf{Y}}\}$ with $n$ nodes, 
where $\hat{\mathbf{X}}$ denotes node features, $\hat{\mathbf{A}}$ denotes adjacency matrix, and $\hat{\mathbf{Y}}$ denotes node labels. 
The goal is to synthesize a compact graph $\mathcal{G}' = {\mathbf{X}', \mathbf{A}', \mathbf{Y}'}$ with $n' \ll n$ nodes, in a way that is resilient to adversarial attacks, so that GNNs trained on $\mathcal{G}'$ can still perform well on test nodes in the original graph $\hat{\mathcal{G}}$.
\begin{equation}
\min_{\mathcal{G}^{\prime}} \mathbb{E}_{\boldsymbol{\Phi} \sim P_{\boldsymbol{\Phi}}} [\mathcal{L}_{\text{task}}(g_{\boldsymbol{\Phi}}(\mathcal{G}^{\prime}), \hat{\mathcal{G}}_{\text{test}})],
\end{equation}
where $g_{\boldsymbol{\Phi}}(\mathcal{G}^{\prime})$ denotes the GNN trained on $\mathcal{G}^{\prime}$, and $\mathcal{L}_{\text{task}}$ denotes the task-specific loss.

\textbf{Framework}. As depicted in Figure~\ref{fig_framework}, we mitigate the increase in classification complexity induced by attacks and enhance GC robustness from three complementary perspectives: \ding{182} \textit{Reduce the intrinsic dimension (Section~\ref{sec_dimension})}. We estimate and constrain the intrinsic dimension of the condensed graph during the GC process. \ding{183} \textit{Minimize the complexity of class boundaries (Section~\ref{sec_curvature})}. By regularizing the curvature of class manifolds in the condensed graph, we achieve smoother geometric decision boundaries between classes. 
\ding{184} \textit{Resolve class ambiguity (Section~\ref{sec_separation})}. We minimize the overlapping volume between class manifolds to reduce classification ambiguity and enhance class separability.

\begin{figure*}[!htbp]
\centering
\includegraphics[width=1.0\textwidth]{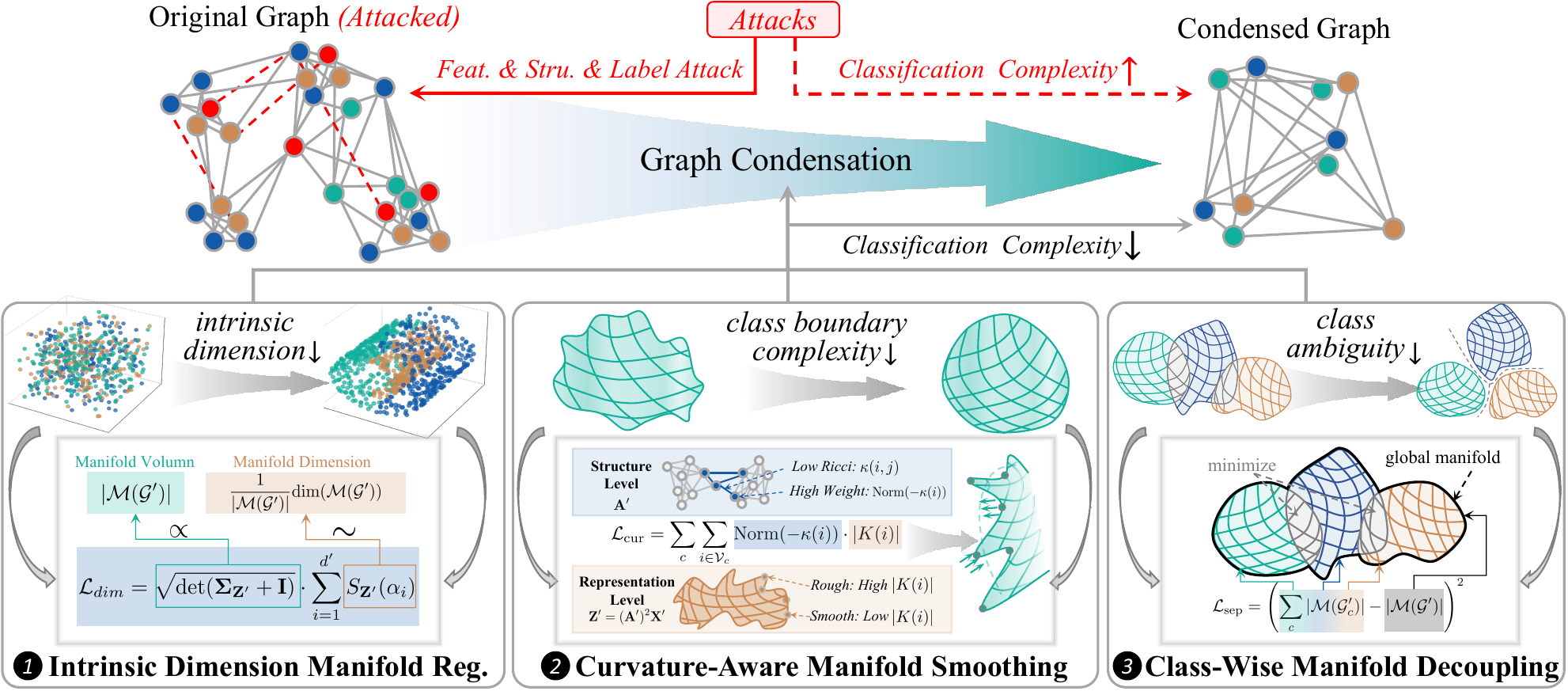}
\vspace{-1em}
\caption{The framework of \ModelName, which introduces three complementary graph manifold learning modules into the GC process: constraining the intrinsic dimension, smoothing classification boundaries via manifold curvature limits, and encouraging class manifold decoupling. These modules address the increase in classification complexity within the condensed graph induced by attacks.}
\label{fig_framework}
\vspace{-1em}
\end{figure*}

\subsection{Intrinsic Dimension Manifold Regularization}
\label{sec_dimension}

From Figure~\ref{fig_intro_complexity}, we empirically observe that adversarial attacks will significantly increase the intrinsic dimension of the condensed graph. Here, we first theoretically analyze the relationship between the intrinsic dimension and the graph condensation process and then propose a differentiable method to estimate the intrinsic dimension of the condensed graph. Finally, we impose constraints on the intrinsic dimension throughout the entire condensing process.

Intrinsic dimension is the minimum number of coordinates required to describe the data~\cite{amsaleg2015estimating}. Following the widely adopted manifold assumption which states that high-dimensional data lie on a lower-dimensional manifold, the intrinsic dimension refers to the dimensionality of this underlying manifold~\cite{ansuini2019intrinsic,ma2024unveiling}.
Here, we begin by presenting the following Theorem~\ref{prop_id_reduce}.
\begin{theorem}
\label{prop_id_reduce}
Given a graph $\mathcal{G}$ with $n$ nodes, let $\mathcal{G^\prime}$ with $n^{\prime}$ nodes denote the much smaller synthetic graph generated through graph condensation, which is comparable to $\mathcal{G}$ in terms of training GNNs. We have the following:
\begin{equation}
    \mathrm{ID(\mathcal{G^{\prime}})} < \mathrm{ID(\mathcal{G})},
\end{equation}
where $\mathrm{ID}(\cdot)$ denotes the intrinsic dimension of graph data.
\end{theorem}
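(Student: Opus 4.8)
The plan is to sandwich the two intrinsic dimensions between quantities controlled by the node counts $n$ and $n'$, and then exploit the strict separation $n' \ll n$. The starting point is the elementary fact that a point cloud of $m$ samples in $\mathbb{R}^{d}$ lies in an affine subspace of dimension at most $m-1$, so its linear intrinsic dimension never exceeds $m-1$; moreover the nearest-neighbour and maximum-likelihood estimators of $\mathrm{ID}$ used in the paper inherit the same sample-size ceiling, since at $m$ points there are at most $m-1$ independent local directions to estimate from. Applying this to the condensed graph, whose node-feature (equivalently, GNN-embedding) cloud has only $n'$ points, immediately gives $\mathrm{ID}(\mathcal{G}') \le n'-1$.

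The second and main step is to obtain a matching lower bound $\mathrm{ID}(\mathcal{G}) \ge n'$, so that $\mathrm{ID}(\mathcal{G}') \le n'-1 < n' \le \mathrm{ID}(\mathcal{G})$ closes the argument. Here I would invoke the defining property of $\mathcal{G}'$ --- that it is comparable to $\mathcal{G}$ for training GNNs yet has size $n' \ll n$ --- in contrapositive form: were $\mathrm{ID}(\mathcal{G}) \le n'-1$ already, then by the first step the decision-relevant geometry of $\mathcal{G}$ would be carried by an affine subspace of dimension at most $n'-1$, i.e., $\mathcal{G}$ would already be, up to the GC objective, a graph of effective size at most $n'$, contradicting the premise that a genuine non-trivial condensation from $n$ down to $n' \ll n$ nodes is what produced $\mathcal{G}'$. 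Equivalently, one adopts a mild non-degeneracy assumption on $\mathcal{G}$ --- that its graph data manifold is not spanned by fewer than $n'$ directions, which is exactly what $n' \ll n$ makes reasonable for real graphs and is consistent with the empirical $\mathrm{ID}$ values in Figure~\ref{fig_intro_complexity} --- and reads off the lower bound directly.

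The hard part is precisely this lower bound: turning the informal ``comparable for training GNNs'' condition into a quantitative statement that forces $\mathrm{ID}(\mathcal{G}) > \mathrm{ID}(\mathcal{G}')$ rather than merely assuming it. A clean route I would pursue is to formalize GNN-training-equivalence through the rank of the GNN feature map (e.g., via a neural-tangent-kernel argument) evaluated on each graph: on $\mathcal{G}'$ this rank is at most $n'$, whereas reproducing the original training dynamics and loss requires the feature map on $\mathcal{G}$ to have strictly larger rank whenever $\mathcal{G}$ is not itself reducible to $n'$ nodes, and the manifold assumption then transfers this rank gap to an $\mathrm{ID}$ gap. I would also bridge the gap between the linear affine-hull bound used for $\mathcal{G}'$ and the possibly nonlinear $\mathrm{ID}$ estimators applied in the experiments, which is handled by observing that every standard local estimator of $\mathrm{ID}$ is bounded above by the number of neighbours available to it, hence by $n'-1$ on the condensed graph.
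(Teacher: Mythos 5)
Your approach is genuinely different from the paper's, but it has a fatal flaw in the lower-bound step, so it does not establish the claim. The paper does not go through any dimension-counting argument at all. Instead it invokes a result from Sutton et al.\ that relates the intrinsic dimension of a class to an upper bound on the probability of correctly classifying that class from $k$ training samples, namely $P(g(x)=l_X) \leq 1-(1-2^{-(\mathrm{ID}(X)+1)})^{k}$. Assuming graph condensation preserves the trained classifier's per-class accuracy (so $\boldsymbol{\Phi}=\boldsymbol{\Phi}'$) and that this upper bound is saturated, the paper solves the bound for $\mathrm{ID}$ on both the original and condensed graphs, subtracts, and observes that the sign of the difference is forced by $k'_c \ll k_c$: to achieve the same accuracy with many fewer training nodes per class, the per-class intrinsic dimension must be strictly smaller. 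The global ID gap then follows by averaging the per-class gaps.

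The gap in your argument is the claimed lower bound $\mathrm{ID}(\mathcal{G}) \geq n'$. Under the low-dimensional manifold hypothesis that the paper explicitly adopts (and that all ID estimators used in the paper presuppose), the intrinsic dimension of the original graph is supposed to be \emph{much smaller} than $n$ and indeed typically a small constant that is independent of the number of nodes. The paper's own Figure~\ref{fig_intro_complexity} reports ID values in the single digits to low tens for Cora, whereas $n'$ at a $2.6\%$ condensation ratio is on the order of $70$; so $\mathrm{ID}(\mathcal{G}) \geq n'$ is false in precisely the empirical setting the theorem is meant to explain. Your ``non-degeneracy'' assumption --- that the original data manifold is not spanned by fewer than $n'$ directions --- is therefore not a mild regularity condition but is actually in direct tension with the manifold assumption underlying the whole framework. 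The affine-hull upper bound $\mathrm{ID}(\mathcal{G}') \leq n'-1$ is correct but far too weak to be useful, and the NTK-rank sketch is not developed into a lower bound on $\mathrm{ID}(\mathcal{G})$ that could survive the above objection. What is actually needed, and what the paper supplies, is a mechanism that ties $\mathrm{ID}$ to the \emph{number of training samples required to hit a given accuracy}, not to the raw sample count, and that is a different (learning-theoretic, not linear-algebraic) ingredient.
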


Theorem~\ref{prop_id_reduce} indicates that graph condensation is a graph intrinsic dimension decreasing process, synthesizing a condensed graph that lies in a data manifold with a lower dimension. The proof can be found in Appendix~\ref{appendix_proof}. Building on this, we further propose the following Theorem~\ref{prop_id_increase}:
\begin{theorem}
\label{prop_id_increase}
Building on Theorem~\ref{prop_id_reduce}, let $\mathcal{G^{\prime}_{*}}$ denotes the synthetic graph generated through graph condensation, where the original graph $\mathcal{G_{*}}$ is under attack. Then we have:
\begin{equation}
    \mathrm{ID(\mathcal{G^{\prime}})} < \mathrm{ID(\mathcal{G^{\prime}_{*}})}.
\end{equation}

\end{theorem}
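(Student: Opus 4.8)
The plan is to derive Theorem~\ref{prop_id_increase} by chaining Theorem~\ref{prop_id_reduce} with two further ingredients: \emph{(i)} adversarial perturbation strictly \emph{inflates} the intrinsic dimension of the \emph{original} graph, i.e.\ $\mathrm{ID}(\mathcal{G}) < \mathrm{ID}(\mathcal{G}_{*})$; and \emph{(ii)} the intrinsic-dimension reduction induced by graph condensation is \emph{order-preserving} in the input intrinsic dimension, i.e.\ the map sending $\mathrm{ID}(\mathcal{G})$ to $\mathrm{ID}(\mathcal{G}')$ at a fixed condensation ratio $r = n'/n$ is monotone nondecreasing. Combining \emph{(i)} and \emph{(ii)} yields $\mathrm{ID}(\mathcal{G}') \le \mathrm{ID}(\mathcal{G}_{*}')$, with the strictness of \emph{(i)} (together with the fact that condensation cannot compress below the dimension required to stay ``comparable for training GNNs'') upgrading this to the strict inequality claimed; Theorem~\ref{prop_id_reduce} simultaneously keeps both sides below their respective originals, so the statement is non-vacuous.

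First I would establish \emph{(i)}. Write the attacked graph as $\mathcal{G}_{*} = \mathcal{G} + \Delta$, where $\Delta$ is the budgeted feature/structure/label perturbation chosen to maximize task loss. The key observation is that an effective adversary cannot confine $\Delta$ to the tangent space of the clean data manifold—such a perturbation would be statistically indistinguishable from natural variation and hence ineffective—so $\Delta$ necessarily introduces new local directions of variation. For any consistent nearest-neighbour / maximum-likelihood intrinsic-dimension estimator of the kind underlying the metrics in Figure~\ref{fig_intro_complexity}, injecting such off-manifold noise provably drives the estimated dimension upward toward the ambient dimension. This both matches the empirical $547.54\%$ increase reported in Section~\ref{sec_intro} and gives $\mathrm{ID}(\mathcal{G}) < \mathrm{ID}(\mathcal{G}_{*})$.

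Next I would establish \emph{(ii)} by reusing the accounting in the proof of Theorem~\ref{prop_id_reduce}. The hypothesis that $\mathcal{G}'$ (resp.\ $\mathcal{G}_{*}'$) is comparable to $\mathcal{G}$ (resp.\ $\mathcal{G}_{*}$) for training GNNs forces the condensate to reproduce, on only $n'$ nodes, the task-relevant geometric content of its source: the number of well-separated class modes, the rank of the class-conditional feature covariances, and the gradient/trajectory-matching signal. Since $\mathrm{ID}(\mathcal{G}')$ is controlled by exactly these retained quantities, and since a richer (higher-ID) source cannot be matched by a condensate of strictly smaller residual dimension than one matching a poorer source, the induced map $\mathrm{ID}(\mathcal{G}) \mapsto \mathrm{ID}(\mathcal{G}')$ is monotone nondecreasing. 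Applying this monotone map to $\mathrm{ID}(\mathcal{G}) < \mathrm{ID}(\mathcal{G}_{*})$ from step \emph{(i)} then delivers the theorem.

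I expect the monotonicity claim \emph{(ii)} to be the main obstacle. The danger is the degenerate regime in which condensation collapses \emph{both} the clean and the attacked graph to essentially the same small intrinsic dimension, rendering the comparison trivial; excluding this requires genuinely using the ``comparable for training'' hypothesis to \emph{lower-bound} $\mathrm{ID}(\mathcal{G}_{*}')$ strictly above $\mathrm{ID}(\mathcal{G}')$, not merely to upper-bound both. A secondary technical nuisance is that intrinsic dimension is not additive, so the chaining in \emph{(i)} and the monotonicity in \emph{(ii)} should be carried out for one fixed estimator—matching the one used in Figure~\ref{fig_intro_complexity}—with its perturbation-stability and data-processing properties tracked explicitly, rather than for an abstract $\mathrm{ID}(\cdot)$ functional.
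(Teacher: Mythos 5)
Your route is genuinely different from the paper's, and the place you flag as the main obstacle, item \emph{(ii)}, is indeed a gap the paper never has to face. The paper does not pass through the intrinsic dimension of the \emph{source} graphs at all, so it needs neither your step \emph{(i)} (attack raises $\mathrm{ID}(\mathcal{G})$) nor your step \emph{(ii)} (monotonicity of the condensation-induced map on intrinsic dimension). Instead it introduces a single qualitative assumption -- that an attack on $\mathcal{G}$ lowers the achievable classification probability of the resulting condensed graph, $P(g_{\boldsymbol{\Phi}_{*}^{\prime}}(n_{c})=l_{c}) < P(g_{\boldsymbol{\Phi}^{\prime}}(n_{c})=l_{c})$ -- and then reuses the exact algebraic identity from the proof of Theorem~\ref{prop_id_reduce}: for class $c$,
\begin{equation*}
\text{ID}\bigl((\mathcal{G}_{*}^{\prime})_{c}\bigr)-\text{ID}(\mathcal{G}^{\prime}_{c})
= \log\!\left(\frac{1-\sqrt[k_{c}^{\prime}]{1-P(g_{\boldsymbol{\Phi}^{\prime}}(n_{c})=l_c)}}{1-\sqrt[(k_{c}^{\prime})_{*}]{1-P(g_{\boldsymbol{\Phi}_{*}^{\prime}}(n_{c})=l_c)}}\right).
\end{equation*}
Since $(k_{c}^{\prime})_{*}=k_{c}^{\prime}$ (both condensed graphs have the same size and per-class split by construction), the denominator has the same root index as the numerator, and lowering the probability makes the denominator strictly smaller; the ratio thus exceeds $1$ and the log is strictly positive. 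Averaging over classes finishes it. Put another way: the ``attack degrades the condensate'' assumption plus the bijection between ID and classification probability at fixed sample size is all that is used, and the inequality comes out strict for free.

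Your proposal would need two additional lemmas, neither of which is established (or easily establishable) in the paper's framework. Step \emph{(i)} is plausible but only sketched -- ``adversarial perturbation cannot lie in the tangent space'' is not a proof, and for structure or label attacks it is not even obviously true. Step \emph{(ii)} is the genuine obstruction: the map $\mathrm{ID}(\mathcal{G}) \mapsto \mathrm{ID}(\mathcal{G}')$ at fixed ratio is not a well-defined function (two graphs with the same intrinsic dimension can condense to condensates of different intrinsic dimension), so ``monotone nondecreasing'' is not even a meaningful property of it without substantial reformulation. You correctly anticipate the degenerate-collapse regime as the danger, but the paper sidesteps that entire issue by never comparing source-graph IDs and never invoking such a map; it compares the two condensates directly via their classification probabilities under identical sample counts. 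If you want to salvage your route, you would have to replace \emph{(ii)} by something like the paper's Assumption that attack degrades the classification probability achievable from the condensate -- at which point you might as well drop \emph{(i)} entirely and argue as the paper does.
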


Theorem~\ref{prop_id_increase} indicates that the adversarial attack poses an increasing intrinsic dimension in the condensed graph. Details of the proof are provided in Appendix~\ref{appendix_proof}. 

Through the theoretical analysis in Theorem~\ref{prop_id_reduce} and Theorem~\ref{prop_id_increase}, protecting the key intrinsic dimension decreasing characteristic of GC is important for improving its robustness against attacks. 
In this work, we propose a novel approach to calculate the dimension of the graph manifold where the condensed graph resides and use it as a regularization term to constrain the intrinsic dimension of the condensed graph.
Specifically, we use the embedding vectors after two rounds of message passing, $\mathbf{Z^{\prime}}=(\mathbf{A}^{\prime})^{2}\mathbf{X}^{\prime}$, as the node representation, which incorporates both feature and structure information.
Let $\mathcal{M}(\mathcal{G^{\prime}})$ denote the manifold on which the condensed graph $\mathcal{G^{\prime}}$ lies, embedded in $\mathbb{R}^{d^{\prime}}$.
Then, $\{\mathbf{Z}_{i}\}_{i=1}^{n^{\prime}}$ represents the discrete set of observations sampled from $\mathcal{M}(\mathcal{G^{\prime}})$.
The intrinsic dimension of $\mathcal{G^{\prime}}$ is defined as the dimension of $\mathcal{M}(\mathcal{G^{\prime}})$ under the low-dimension manifold assumption~\cite{ansuini2019intrinsic}, denoted as $\dim(\mathcal{M}(\mathcal{G^{\prime}}))$.
According to~\cite{osher2017low}, $\dim(\mathcal{M}(\mathcal{G^{\prime}}))$ is given by:
\begin{equation}
\label{eq_dim_definition}
   \dim(\mathcal{M}(\mathcal{G^{\prime}}))
   =\sum_{i=1}^{d^{\prime}} \int_{\mathcal{M}(\mathcal{G^{\prime}})}\Vert\nabla_{\mathcal{M}(\mathcal{G^{\prime}})}\alpha_{i}(\mathbf{z}^{\prime}) \Vert d\mathbf{z}^{\prime}.
\end{equation}
Here, $\nabla_{\mathcal{M}(\mathcal{G^{\prime}})}\alpha_{i}(\mathbf{z}^{\prime})$ represents the gradient of coordinate function $\alpha_{i}$ on $\mathcal{M}(\mathcal{G^{\prime}})$ at point $\mathbf{z}^{\prime}$, where $\alpha_{i}(\mathbf{z}^{\prime})\!=\!z_{i}^{\prime}$.
Directly solving Eq.~\eqref{eq_dim_definition} requires constructing the explicit function of $\mathcal{M}(\mathcal{G}^{\prime})$, which is challenging to derive from the discrete observations $\{\mathbf{Z}_{i}^{\prime}\}_{i=1}^{n^{\prime}}$.
In this work, we adopt the Laplacian approximation~\cite{zeng20193d} to solve Eq.~\eqref{eq_dim_definition}:
\begin{align}
\label{eq_graph_laplacian}
\int_{\mathcal{M}(\mathcal{G^{\prime}})}\Vert\nabla_{\mathcal{M}(\mathcal{G^{\prime}})}\alpha_{i}(\mathbf{z}^{\prime}) \Vert d\mathbf{z}^{\prime} \sim |\mathcal{M}(\mathcal{G^{\prime}})|S_{\mathbf{Z}^{\prime}}(\alpha_{i}),
\end{align}
where $S_{\mathbf{Z}^{\prime}}(\alpha_{i})\!=\!\sum_{p,q}\exp(-\frac{\Vert \mathbf{Z}^{\prime}_{p} -\mathbf{Z}^{\prime}_{q} \Vert_{2}^{2}}{2\epsilon^{2}})(\alpha_{i}(\mathbf{Z}^{\prime}_{p})\!-\!\alpha_{i}(\mathbf{Z}^{\prime}_{q}))$ is the graph Laplacian operator with hyperparameter $\epsilon$, and $|\mathcal{M}(\mathcal{G^{\prime}})|$ is the volume of the manifold. 
Based on the geometric interpretation of singular values~\cite{aggarwal2020linear}, the volume $|\mathcal{M}(\mathcal{G^{\prime}})|$ is proportional to the product of the singular values $\{\sigma_i\}_{i=1}^{d^{\prime}}$ of node representation matrix $\mathbf{Z}^{\prime}$, i.e., $|\mathcal{M}(\mathcal{G^{\prime}})| \propto \prod_{i=1}^{d^{\prime}} \sigma_i$. 
The covariance matrix of $\mathbf{Z}^{\prime}$ is defined as  $\mathbf{\Sigma_{Z}^{\prime}}=\frac{1}{n}(\mathbf{Z}^{\prime})^{\top}\mathbf{Z}^{\prime}$. By the relationship between singular values and eigenvalues, the eigenvalues of $\mathbf{\mathbf{\Sigma}_{Z}^{\prime}}$ (denoted by $\{\lambda_{i}\}_{i=1}^{d^{\prime}}$) are related to the singular values of $\mathbf{Z}$ as $\lambda_{i}=\frac{\sigma^{2}_{i}}{n}$. Thus, the volume $|\mathcal{M}(\mathcal{G^{\prime}})|$ can be expressed as:
\begin{equation}
    |\mathcal{M}(\mathcal{G}^{\prime})|\propto \sqrt{\det(\mathbf{\Sigma_{Z^{\prime}}}+\mathbf{I})},
\end{equation}
where $\mathbf{I}$ is to ensure that the covariance matrix $\mathbf{\Sigma}_{\mathbf{Z^{\prime}}}$ is positive definite, and $\det(\cdot)$ denotes the determinant operation. Finally, the intrinsic dimension manifold regularization loss during graph condensation is defined as:
\begin{equation}
\label{eq_loss_dimension}
\mathcal{L}_{dim}=\sqrt{\det(\mathbf{\Sigma_{Z^{\prime}}}+\mathbf{I})}\cdot \sum_{i=1}^{d^{\prime}}S_{\mathbf{Z}^{\prime}}(\alpha_{i}). 
\end{equation}

\subsection{Curvature-Aware Manifold Smoothing}
\label{sec_curvature}
The complexity of class boundaries, determined by the geometry of class manifolds, is a key factor in classification difficulty~\cite{lorena2019complex}. Attacks can increase this complexity in condensed graphs, weakening GC robustness. To quantify it, we measure the Gaussian curvature~\cite{richard1994detection}, where larger absolute values indicate more intricate boundaries. Each node's curvature is computed, and the weighted sum of absolute values is used as a regularization term. Weights reflect each node's influence in message passing and are derived from Ricci curvature~\cite{topping2021understanding}.


Let $\mathcal{M}(\mathcal{G}_{c}^{\prime})$ denote the class-$c$ manifold containing nodes with label $c$, represented by $\{\mathbf{Z}_{i}^{\prime} \mid \mathbf{Y}_{i}^{\prime} = c\}$. Our goal is to compute the Gaussian curvature of each node on $\mathcal{M}(\mathcal{G}{c}^{\prime})$. We estimate the curvature at node $i$ by fitting a quadratic hypersurface $f_{\boldsymbol{\Theta}}(\mathbf{o}) = \mathbf{o}^\top \boldsymbol{\Theta} \mathbf{o}$ to its local neighborhood, where $\boldsymbol{\Theta}$ is the surface parameter. The curvature is obtained as the determinant of the Hessian of $f_{\boldsymbol{\Theta}}(\boldsymbol{\xi})$~\cite{ma2023curvature}. Specifically, we project neighbors of node $i$ onto its tangent space, and use these projections as inputs to fit the hypersurface. The targets are the corresponding projections onto the normal vector at node $i$, capturing how neighbors deviate from the tangent space and thus reflecting local manifold geometry.

We estimate the normal vector $\mathbf{u}_{i}$ at node $i$ using its $k$-nearest neighbors in Euclidean space~\cite{asao2021curvature}:
\begin{equation}
\label{eq_normal_vector}
    \min_{\mathbf{u}_{i}} \sum_{j=1}^{k} \left((\mathbf{Z}_{i}^{j} - \mathbf{c}_{i})^\top \mathbf{u}_{i}\right)^2,
    \quad \text{subject to } \mathbf{u}_{i}^\top \mathbf{u}_{i} = 1,
\end{equation}
where $\mathbf{Z}_{i}^{j}$ denotes the node representation of $j$-th neighborhood and $\mathbf{c}_{i}\!=\!\frac{1}{k}\sum_{j=1}^{k}\mathbf{Z}_{i}^{j}$ is the center of $k$ neighborhoods. 
To solve Eq.~\eqref{eq_normal_vector}, we define the Lagrangian function:
\begin{equation}
\label{eq_lagrange}
    \mathcal{L}(\mathbf{u_{i}},\lambda)= \sum_{j=1}^{k}((\mathbf{Z}_{i}^{j}-\mathbf{c}_{i})\mathbf{u}_{i})^{2} - \lambda(\mathbf{u}_{i}^{\top}\mathbf{u}_{i}-1),
\end{equation}
Define $\mathbf{Y} = \mathbf{Z}_{i} - \mathbf{c}_{i} \mathbf{1}^{\top}$. Solving the Karush-Kuhn-Tucker conditions for Eq.~\eqref{eq_lagrange} shows that the normal vector $\mathbf{u}_{i}$ corresponds to the eigenvector associated with the smallest eigenvalue of $\mathbf{Y}^{\top}\mathbf{Y}$; details are provided in Appendix~\ref{appendix_proof}. Let $\{\lambda_{1}, \dots, \lambda_{d^{\prime}}\}$ and $\{\boldsymbol{\xi}_{1}, \dots, \boldsymbol{\xi}_{d^{\prime}}\}$ be the eigenvalues and corresponding eigenvectors of $\mathbf{Y}^{\top}\mathbf{Y}$, sorted in descending order. Since $\mathbf{Y}^{\top}\mathbf{Y}$ is symmetric and positive semidefinite, we have $\lambda_{1} \geq \dots \geq \lambda_{d^{\prime}} \geq 0$ and $\boldsymbol{\xi}_{a}^{\top} \boldsymbol{\xi}_{b} = 0$ for all $a \neq b$. The $(d^{\prime} - 1)$-dimensional tangent space at node $i$ is spanned by $\langle \boldsymbol{\xi}_{1}, \dots, \boldsymbol{\xi}_{d^{\prime}-1} \rangle$, and the projections of $i$'s $k$ neighbors onto this space form the matrix $\mathbf{O}_{i} \in \mathbb{R}^{k \times (d^{\prime}-1)}$, defined as:
\begin{equation}
    \mathbf{O}_{i}=[\mathbf{o}_{1},\mathbf{o}_{2}, \dots,\mathbf{o}_{k}] ^{\top},
\end{equation}
where each $\mathbf{o}_{j}\in \mathbb{R}^{d^{\prime}-1}$ corresponds to the projection of $(\mathbf{Z}_{i}^{j}-\mathbf{c}_{i})$ onto the tangent space:
\begin{equation}
    \mathbf{o}_{j}=[(\mathbf{Z}_{i}^{j}-\mathbf{c}_{i})\cdot \boldsymbol{\xi}_{1}, \dots, (\mathbf{Z}_{i}^{j}-\mathbf{c}_{i})\cdot \boldsymbol{\xi}_{d^{\prime}-1}].
\end{equation}
Then we propose the following proposition:
\begin{proposition}
\label{prop_analytic_solutioin}
By fitting the quadratic hypersurface $f_{\boldsymbol{\Theta}}(\mathbf{o})$ via  $\min_{\boldsymbol{\Theta}}\sum_{j=1}^{k}(\frac{1}{2}\mathbf{o}_{j}^{\top}\boldsymbol{\Theta}\mathbf{o}_{j}-t_{j})^{2}$, where $t_{j}\!\!=\!\!(\mathbf{Z}_{i}^{j}\!-\!\mathbf{Z}_{i})\!\cdot\!\mathbf{u}_{i}$ represents the projection along the normal vector, the Gaussian curvature of the class manifold at node $i$ is given by $K(i)\!\!=\!\!2\mathrm{det}(\mathrm{Mat}(\mathbf{Q}^{-1}\mathbf{p}))$. Here $\mathbf{Q}\!\!\in\! \!\mathbb{R}^{(d^{\prime}\!-\!1)^{2}\times (d^{\prime}\!-\!1)^{2}}$ is a fourth-order tensor expressed as a matrix with entries $\mathbf{Q}_{a,b,c,d}\!\!=\!\!\sum_{j=1}^{k}o_{ja}o_{jb}o_{jc}o_{jd}$, and $\mathbf{p}\in \mathbb{R}^{(d^{\prime}-1)^{2}}$ is a second-order tensor with entries $\mathbf{p}_{a,b}=\sum_{j=1}^{k}t_{j}o_{ja}o_{jb}$. The operator $\mathrm{Mat}(\cdot)$ reshapes $\mathbf{Q}^{-1}\mathbf{p}$ into an $(d^{\prime}-1)\times (d^{\prime}-1)$ matrix, and $\mathrm{det}(\cdot)$ denotes the determinant operation.
\end{proposition}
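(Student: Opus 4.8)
The plan is to recognise the fitting problem $\min_{\boldsymbol{\Theta}}\sum_{j=1}^{k}(\tfrac12\mathbf{o}_{j}^{\top}\boldsymbol{\Theta}\mathbf{o}_{j}-t_{j})^{2}$ as an ordinary linear least-squares problem in disguise: although it looks quadratic in $\mathbf{o}_{j}$, it is \emph{linear} in the unknown $\boldsymbol{\Theta}$. Writing $\mathbf{o}_{j}^{\top}\boldsymbol{\Theta}\mathbf{o}_{j}=\sum_{a,b}o_{ja}o_{jb}\Theta_{ab}=\langle\mathbf{o}_{j}\mathbf{o}_{j}^{\top},\boldsymbol{\Theta}\rangle$, I would vectorise $\boldsymbol{\theta}=\mathrm{vec}(\boldsymbol{\Theta})\in\mathbb{R}^{(d^{\prime}-1)^{2}}$ and set $\mathbf{v}_{j}=\mathrm{vec}(\mathbf{o}_{j}\mathbf{o}_{j}^{\top})$, so that $(\mathbf{v}_{j})_{ab}=o_{ja}o_{jb}$ and the objective becomes $\sum_{j}(\tfrac12\mathbf{v}_{j}^{\top}\boldsymbol{\theta}-t_{j})^{2}$, a textbook least-squares fit of $\boldsymbol{\theta}$ against the design vectors $\mathbf{v}_{j}$ with targets $t_{j}$.

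Next I would write the normal equations. Differentiating and setting the gradient to zero gives $\tfrac12\bigl(\sum_{j}\mathbf{v}_{j}\mathbf{v}_{j}^{\top}\bigr)\boldsymbol{\theta}=\sum_{j}t_{j}\mathbf{v}_{j}$. Reading off entries, $\bigl(\sum_{j}\mathbf{v}_{j}\mathbf{v}_{j}^{\top}\bigr)_{(ab),(cd)}=\sum_{j}o_{ja}o_{jb}o_{jc}o_{jd}=\mathbf{Q}_{a,b,c,d}$ and $\bigl(\sum_{j}t_{j}\mathbf{v}_{j}\bigr)_{ab}=\sum_{j}t_{j}o_{ja}o_{jb}=\mathbf{p}_{a,b}$, so the minimiser is $\boldsymbol{\theta}\propto\mathbf{Q}^{-1}\mathbf{p}$, i.e.\ $\boldsymbol{\Theta}=\mathrm{Mat}(\mathbf{Q}^{-1}\mathbf{p})$ up to the scalar produced by the $\tfrac12$; this already pins down the fitted hypersurface. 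I would then connect the fit to curvature: because the coordinates $\mathbf{o}$ are projections onto the estimated tangent space $\langle\boldsymbol{\xi}_{1},\dots,\boldsymbol{\xi}_{d^{\prime}-1}\rangle$ at node $i$ and $t$ is the coordinate along the normal $\mathbf{u}_{i}$, node $i$ sits at the origin of this chart with vanishing first-order part, so locally the class manifold is the graph $t=\tfrac12\mathbf{o}^{\top}\boldsymbol{\Theta}\mathbf{o}$ over its own tangent plane. For such a graph the first fundamental form at the base point is the identity and the second fundamental form equals the Hessian of the defining function, hence the Gaussian curvature is the determinant of that Hessian; substituting the closed form and collecting the normalisation constants yields $K(i)=2\,\mathrm{det}(\mathrm{Mat}(\mathbf{Q}^{-1}\mathbf{p}))$.

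I expect the main obstacle to be the careful treatment of the tensor $\mathbf{Q}$ and of the symmetry of $\boldsymbol{\Theta}$. Viewed as an operator on $\mathbb{R}^{(d^{\prime}-1)^{2}}$, $\mathbf{Q}$ annihilates every antisymmetric matrix $M$, since $\mathbf{o}_{j}^{\top}M\mathbf{o}_{j}=0$; it is therefore rank-deficient, and ``$\mathbf{Q}^{-1}$'' must be interpreted as the inverse on the symmetric subspace (equivalently, the minimum-norm least-squares solution). I would argue this is harmless: the antisymmetric part of $\boldsymbol{\Theta}$ does not affect $\mathbf{o}_{j}^{\top}\boldsymbol{\Theta}\mathbf{o}_{j}$, and since $\mathbf{p}$ is symmetric and $\mathbf{Q}$ preserves the symmetric subspace, the natural solution $\mathrm{Mat}(\mathbf{Q}^{-1}\mathbf{p})$ is symmetric — exactly what is required for it to play the role of a Hessian in the final step — and a genericity argument on the $k$ neighbour projections shows $\mathbf{Q}$ restricted to symmetric matrices is positive definite, so the minimiser is unique. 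The second, more conceptual point requiring care is the differential-geometric step: one must justify that fitting a pure quadratic in tangent/normal coordinates genuinely recovers the second fundamental form rather than an ad hoc surrogate, which hinges on the tangent space being estimated consistently — precisely the content of the normal-vector computation of Eq.~\eqref{eq_normal_vector} that precedes the proposition.
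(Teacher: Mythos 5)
Your proof is correct and essentially mirrors the paper's own argument: both treat the fit as an ordinary linear least-squares problem in the entries of $\boldsymbol{\Theta}$, write the normal equations, and identify the Gram matrix with $\mathbf{Q}$ and the moment vector with $\mathbf{p}$ — your vectorized outer products $\mathbf{v}_{j}=\mathrm{vec}(\mathbf{o}_{j}\mathbf{o}_{j}^{\top})$ are precisely the rows of the paper's design matrix $\mathbf{O}^{(i)}$, so $\sum_{j}\mathbf{v}_{j}\mathbf{v}_{j}^{\top}=(\mathbf{O}^{(i)})^{\top}\mathbf{O}^{(i)}$ and $\sum_{j}t_{j}\mathbf{v}_{j}=(\mathbf{O}^{(i)})^{\top}\mathbf{T}$. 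The one point you add beyond the paper's proof is the observation that $\mathbf{Q}$ annihilates antisymmetric matrices and must therefore be read as an inverse on the symmetric subspace (or a pseudoinverse), which is a legitimate and worthwhile caveat that the paper glosses over by writing $\mathbf{Q}^{-1}$ without qualification; your hand-wave about ``collecting the normalisation constants'' is no looser than the paper's own treatment, which likewise does not track the scalar prefactor through the determinant carefully.
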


Proposition~\ref{prop_analytic_solutioin} provides a closed-form expression for the Gaussian curvature at node $i$, with the proof deferred to Appendix~\ref{appendix_proof}.
However, averaging curvature across all nodes ignores the varying structural roles of individual nodes. In particular, nodes at community boundaries, which serve as bridges for inter-community message passing, have a greater influence on the geometric complexity of class boundaries.
To account for this, we reweight each node's Gaussian curvature using Ricci curvature~\cite{topping2021understanding}, an edge-based metric that reflects structural connectivity.
Lower Ricci curvature values on edges indicate stronger bridging roles, highlighting the corresponding node’s importance.
We adopt the Ollivier definition~\cite{ollivier2010survey}, where the Ricci curvature between nodes $(i,j)$ is defined as $\kappa(i,j)=1-\mathcal{W}(m_{i}^{\alpha},m_{i}^{\alpha})/\mathcal{D}(i,j)$,
where $\mathcal{W}(\cdot, \cdot)$ is the Wasserstein distance of order 1, $\mathcal{D}(\cdot,\cdot)$ denotes the shortest-path distance, and $m_{u}^{\alpha}$ represents the mass distribution, which is defined as:
\begin{equation}
    m_{i}^{\alpha}(j) = \left\{ 
    \begin{array}{ll} 
        \alpha, & \text{if }j=i, \\
        (1-\alpha)\frac{\mathbf{A}_{ij}}{\text{deg}(i)}, & \text{if } j \in \mathcal{N}(i), \\
        0, & \text{otherwise},
    \end{array}
    \right.
\end{equation}
where $\mathcal{N}(i)$ denotes the neighbors of node $i$, $\text{deg}(i)=\sum_{j\in \mathcal{N}(i)}\mathbf{A}_{ij}$, and $\alpha$ is the smoothing parameter and is typically set to $0.5$.
The strategy for measuring the Ricci curvature of node $i$ is to average the curvatures of its connected edges ~\cite{coupette2022ollivier}, expressed as $\kappa(i)=\frac{\mathbf{A}_{ij}}{\text{deg}(i)}\sum_{j\in \mathcal{N}(i)}\kappa(i,j)$.
Finally, the Gaussian curvature regularization term, denoted as $\mathcal{L}_{\text{cur}}$, is defined as:
\begin{equation}
\label{eq_loss_curvature}
    \mathcal{L}_{\text{cur}} = \sum_{c} \sum_{i\in \mathcal{V}_{c}} \text{Norm}(-\kappa(i))\cdot |K(i)|,
\end{equation}
where $\text{Norm}(\cdot)$ denotes min-max normalization to $[0,1]$, and $\mathcal{V}_{c}$ denotes nodes belonging to class $c$.

\subsection{Class-Wise Manifold Decoupling}
\label{sec_separation}
Class ambiguity is the third critical factor that contributes to the classification complexity~\cite{ho2002complexity,lorena2019complex}.
To preserve the classification complexity reduction property of GC and improve its robustness, it is essential to avoid class ambiguity in the condensed graph, ensuring that the classes remain well-separated with clear decision boundaries.
To avoid the class ambiguity arising in condensed graphs under attacks, we propose measuring the overlap between class manifolds by calculating the difference between the sum of the volumes of individual class manifolds and the volume of the entire data manifold. Minimizing this difference defines our class-wise manifold decoupling objective, which can be expressed as follows:
\begin{equation}
\label{eq_loss_seperation}
    \mathcal{L}_{\text{sep}} = \left(\sum_{c}|\mathcal{M}(\mathcal{G}^{\prime}_{c})|-|\mathcal{M}(\mathcal{G}^{\prime})|\right)^{2},
\end{equation}
where $|\mathcal{M}(\mathcal{G}^{\prime}_{c})|$ represents the volume of class-$c$ manifold. This approach facilitates sufficient decoupling between class manifolds to mitigate class ambiguity, thereby effectively reducing classification complexity in the condensed graph.

\textbf{Training Pipeline}. We initialize node features in the condensed graph by randomly selecting non-outlier nodes from the original graph, where outliers are identified based on the Euclidean distances of their features.  
Other initialization follows~\cite{jin2021graph}. The training loss is as follows:
\begin{equation}
    \mathcal{L} = \mathcal{L}_{\text{GC}} +\alpha\mathcal{L}_{\text{dim}} + \beta \mathcal{L}_{\text{cur}} + \gamma \mathcal{L}_{\text{sep}},
\end{equation}
where $\mathcal{L}_{\text{GC}}$ denotes the loss function of GC backbone, as \ModelName\ is a plug-and-play framework, and $\alpha,\beta,\gamma$ are hyperparameters. Detailed pipeline are in Appendix~\ref{apen_algorithm}.

\textbf{Complexity Analysis.} The Intrinsic Dimension Manifold Regularization Module requires $\mathcal{O}(n^{\prime}d^{\prime} + (d^{\prime})^3)$ operations. The Curvature-Aware Manifold Smoothing consists of two parts: Gaussian curvature computation, with a complexity of $\mathcal{O}(n^{\prime}((d^{\prime})^6 + k))$, and Ricci curvature computation, which requires $\mathcal{O}((n^{\prime})^2)$ operations. The Class-Wise Manifold Decoupling Module has a complexity of $\mathcal{O}(c(d^{\prime})^3)$, where $c$ denotes the number of classes.
\textbf{It is worth noting that the practical computational cost remains efficient} because $n^{\prime}$, the number of nodes in the condensed graph, is small by design. Additionally, we apply PCA~\cite{dunteman1989principal} to reduce the feature dimensionality before computing the regularization terms, ensuring that $d^{\prime}$ stays manageable. Details are provided in Appendix~\ref{apen_algorithm}.

\section{Experiments}
\label{sec:exp}

\subsection{Experimental Settings}
\textbf{Datasets.} We evaluate \ModelName~\footnote{Our code is available at \url{https://github.com/RingBDStack/MRGC}}and the baselines on five real-world node classification datasets in a transductive setting: Cora~\cite{yang2016revisiting}, CiteSeer~\cite{yang2016revisiting}, PubMed~\cite{yang2016revisiting}, DBLP~\cite{bojchevski2017deep}, and Ogbn-arxiv~\cite{hu2020open}. The data split configuration follows that of~\cite{gong2024gc4nc} for the Cora, CiteSeer, PubMed, and Ogbn-arxiv datasets. For the DBLP dataset, we use the settings from~\cite{huang2021scaling}, performing random splits with 20 labeled nodes per class for training, 30 per class for validation, and the remaining nodes for testing.

\textbf{Attacks.}
For the poisoning attacks, we use the widely adopted PRBCD~\cite{geisler2021robustness,sun2022adversarial} for structure perturbation. For feature perturbation, we randomly select nodes and assign their features by sampling from a normal distribution. For label perturbation, we randomly select a subset of nodes and uniformly flip their labels to other classes. The attack budget is set to $p$ percent of the total number of edges for structure perturbation and $p$ percent of the number of training nodes for feature and label perturbation.

\textbf{Baselines.} We evaluate \ModelName~against various baseline approaches, including five state-of-the-art graph condensation methods: GCond~\cite{jin2021graph}, SGDD~\cite{yang2023does}, SFGC~\cite{zheng2024structure}, GEOM~\cite{zhang2024navigating} and GCDM~\cite{liu2022graph}. 
We also compare with RobGC~\cite{gao2024robgc}, the first robust graph condensation method specifically designed for defending against structure attacks. 
Furthermore, following~\cite{gao2024robgc}, 
we enhance our comparison by incorporating three strong graph denoising techniques as preprocessing steps for GCond: GCond(+J), which removes edges based on Jaccard similarity; GCond(+S), which uses Singular Value Decomposition (SVD) for low-rank approximation to mitigate high-rank noise; and GCond(+K), which integrates $k$-nearest neighbors based on feature similarity into the original graph with $k=3$.

\textbf{Implement Details.} In this experiment, we use GCond~\cite{jin2021graph} as the backbone of \ModelName, which is a gradient-matching-based GC method. However, it is important to note that \ModelName~is compatible with most existing GC methods.
The hyperparameters $\alpha$, $\beta$, and $\gamma$ are determined through a grid search from 1e-3 to 1e2 with logarithmic steps of 5. Details can be found in Appendix~\ref{apendix_exp} and our code.
We repeat all the experiments five times and report the average performance and standard deviation. All the experiments are conducted in a single NVIDIA GeForce RTX 3090 24GB GPU.

\begin{table*}[!hb]
  \centering
  \caption{Performance comparison under \textit{different condensation ratios} when the training graph is corrupted. S, F, and L represent the attack budgets for structure, feature, and label, respectively. The best results are highlighted in \textbf{bold}, while the runner-up results are \underline{underlined}. OOM denotes out-of-memory (24GB), and OOT denotes out-of-time (24 hours).}
  \label{table_ratio}
  \tabcolsep=0.1cm
  \resizebox{\linewidth}{!}{ 
   \begin{tabular}{c|c|cccccccccc}
    \toprule
    \multicolumn{1}{c|}{\makecell[c]{Dataset\\(S,F,L)}} & Ratio & GCond & SGDD  & SFGC  & GEOM  & GCDM  & \multicolumn{1}{c}{\makecell[c]{GCond\\(+S)}} & \multicolumn{1}{c}{\makecell[c]{GCond\\(+J)}} & \multicolumn{1}{c}{\makecell[c]{GCond\\(+K)}} & \multicolumn{1}{c}{RobGC} & \textbf{\ModelName} \\
    \midrule
    \multicolumn{1}{c|}{\multirow{3}[2]{*}{\makecell[c]{Cora\\(1,10,20)}}} & 1.30\% & 70.69\scalebox{0.8}{±1.79} & 68.28\scalebox{0.8}{±0.98} & 39.70\scalebox{0.8}{±1.61} & 40.43\scalebox{0.8}{±3.10} & 45.91\scalebox{0.8}{±4.19} & 68.11\scalebox{0.8}{±1.27} & 69.29\scalebox{0.8}{±0.48} & \underline{71.79\scalebox{0.8}{±0.51}} & 71.60\scalebox{0.8}{±1.64} & \textbf{77.43\scalebox{0.8}{±0.32}} \\
          & 2.60\% & 71.39\scalebox{0.8}{±0.82} & 68.48\scalebox{0.8}{±0.62} & 50.10\scalebox{0.8}{±4.42} & 54.00\scalebox{0.8}{±4.50} & 49.89\scalebox{0.8}{±6.01} & 68.82\scalebox{0.8}{±1.89} & 69.66\scalebox{0.8}{±1.66} & 72.00\scalebox{0.8}{±0.75} & \underline{72.19\scalebox{0.8}{±0.89}} & \textbf{76.72\scalebox{0.8}{±0.76}} \\
          & 5.20\% & 71.07\scalebox{0.8}{±1.42} & 68.63\scalebox{0.8}{±1.04} & 68.26\scalebox{0.8}{±0.97} & 70.03\scalebox{0.8}{±0.40} & 51.37\scalebox{0.8}{±2.25} & 71.92\scalebox{0.8}{±2.19} & 70.43\scalebox{0.8}{±1.79} & 71.75\scalebox{0.8}{±0.69} & \underline{72.51\scalebox{0.8}{±1.42}} & \textbf{74.40\scalebox{0.8}{±0.29}} \\
    \midrule
    \multicolumn{1}{c|}{\multirow{3}[2]{*}{\makecell[c]{CiteSeer\\(1,10,20)}}} & 0.90\% & \underline{60.77\scalebox{0.8}{±2.00}} & 47.13\scalebox{0.8}{±0.75} & 36.23\scalebox{0.8}{±1.06} & 36.70\scalebox{0.8}{±1.93} & 46.72\scalebox{0.8}{±4.49} & 58.49\scalebox{0.8}{±5.72} & 60.11\scalebox{0.8}{±1.74} & 58.62\scalebox{0.8}{±0.37} & 59.88\scalebox{0.8}{±1.70} & \textbf{65.12\scalebox{0.8}{±0.69}} \\
          & 1.80\% & 61.03\scalebox{0.8}{±1.28} & 56.12\scalebox{0.8}{±0.35} & 48.69\scalebox{0.8}{±1.74} & 47.17\scalebox{0.8}{±1.54} & 46.01\scalebox{0.8}{±1.74} & 60.12\scalebox{0.8}{±3.58} & 59.80\scalebox{0.8}{±2.68} & 60.58\scalebox{0.8}{±1.97} & \underline{61.77\scalebox{0.8}{±2.46}} & \textbf{63.87\scalebox{0.8}{±1.04}} \\
          & 3.60\% & 61.08\scalebox{0.8}{±1.70} & 52.11\scalebox{0.8}{±0.94} & 62.17\scalebox{0.8}{±0.38} & 60.67\scalebox{0.8}{±0.15} & 47.03\scalebox{0.8}{±1.35} & 61.15\scalebox{0.8}{±2.73} & 61.88\scalebox{0.8}{±2.73} & 61.91\scalebox{0.8}{±0.76} & \underline{62.50\scalebox{0.8}{±1.65}} & \textbf{64.44\scalebox{0.8}{±0.97}} \\
    \midrule
    \multicolumn{1}{c|}{\multirow{3}[2]{*}{\makecell[c]{PubMed\\(0.1,10,20)}}} & 0.08\% & \underline{70.81\scalebox{0.8}{±0.74}} & 48.75\scalebox{0.8}{±0.91} & 47.26\scalebox{0.8}{±2.74} & 50.17\scalebox{0.8}{±2.52} & 63.81\scalebox{0.8}{±1.57} & \multirow{3}[2]{*}{OOM} & 70.00\scalebox{0.8}{±1.01} & 66.48\scalebox{0.8}{±2.13} & 69.27\scalebox{0.8}{±0.14} & \textbf{74.85\scalebox{0.8}{±0.34}} \\
          & 0.15\% & 69.02\scalebox{0.8}{±0.62} & 49.08\scalebox{0.8}{±0.20} & 64.58\scalebox{0.8}{±0.79} & 62.90\scalebox{0.8}{±1.11} & 58.90\scalebox{0.8}{±4.47} &       & 70.31\scalebox{0.8}{±2.08} & 67.28\scalebox{0.8}{±1.74} & \underline{71.59\scalebox{0.8}{±1.62}} & \textbf{73.11\scalebox{0.8}{±0.39}} \\
          & 0.30\% & 71.59\scalebox{0.8}{±1.87} & 51.31\scalebox{0.8}{±3.14} & 63.18\scalebox{0.8}{±0.44} & 64.93\scalebox{0.8}{±1.12} & 61.75\scalebox{0.8}{±1.09} &       & 70.33\scalebox{0.8}{±2.18} & 68.58\scalebox{0.8}{±0.42} & \underline{71.71\scalebox{0.8}{±0.73}} & \textbf{73.04\scalebox{0.8}{±0.26}} \\
    \midrule
    \multicolumn{1}{c|}{\multirow{3}[2]{*}{\makecell[c]{DBLP\\(0.1,10,20)}}} & 0.11\% & 62.50\scalebox{0.8}{±1.41} & 51.06\scalebox{0.8}{±3.63} & 41.71\scalebox{0.8}{±1.27} & 49.71\scalebox{0.8}{±1.99} & 54.78\scalebox{0.8}{±7.74} & \multirow{3}[2]{*}{OOM} & 57.13\scalebox{0.8}{±0.80} & 58.53\scalebox{0.8}{±1.46} & \underline{62.66\scalebox{0.8}{±1.46}} & \textbf{65.11\scalebox{0.8}{±1.26}} \\
          & 0.23\% & 61.37\scalebox{0.8}{±1.75} & 44.54\scalebox{0.8}{±1.77} & 46.95\scalebox{0.8}{±1.91} & 53.25\scalebox{0.8}{±1.93} & 54.31\scalebox{0.8}{±2.36} &       & 56.78\scalebox{0.8}{±1.74} & 60.05\scalebox{0.8}{±4.44} & \underline{61.65\scalebox{0.8}{±0.75}} & \textbf{65.49\scalebox{0.8}{±1.50}} \\
          & 0.45\% & 62.73\scalebox{0.8}{±1.20} & 53.45\scalebox{0.8}{±4.44} & 61.60\scalebox{0.8}{±0.99} & 60.31\scalebox{0.8}{±0.66} & 52.87\scalebox{0.8}{±1.88} &       & 57.09\scalebox{0.8}{±2.40} & 63.58\scalebox{0.8}{±0.66} & \underline{62.87\scalebox{0.8}{±1.36}} & \textbf{64.60\scalebox{0.8}{±0.18}} \\
    \midrule
    \multicolumn{1}{c|}{\multirow{3}[2]{*}{\makecell[c]{Ogbn-Arxiv\\(1,10,20)}}} & 0.05\% & 58.16\scalebox{0.8}{±0.87} & 52.41\scalebox{0.8}{±1.55} & 55.32\scalebox{0.8}{±0.51} & \underline{58.38\scalebox{0.8}{±0.89}} & 43.20\scalebox{0.8}{±2.10} & \multirow{3}[2]{*}{OOM} & 57.15\scalebox{0.8}{±0.15} &  58.32\scalebox{0.8}{±0.89}    & \multirow{3}[2]{*}{OOT} & \textbf{59.38\scalebox{0.8}{±0.58}} \\
          & 0.25\% & 59.82\scalebox{0.8}{±0.49} & 57.81\scalebox{0.8}{±0.77} & 58.37\scalebox{0.8}{±2.52} & \textbf{66.49\scalebox{0.8}{±0.45}} & 52.59\scalebox{0.8}{±1.47} &       & 60.15\scalebox{0.8}{±1.20} &   60.96\scalebox{0.8}{±0.42}    &       & \underline{61.79\scalebox{0.8}{±0.72}} \\
          & 0.50\% & 60.50\scalebox{0.8}{±0.31} & 61.04\scalebox{0.8}{±0.66} & 62.94\scalebox{0.8}{±1.57} & \textbf{67.56\scalebox{0.8}{±0.22}} & 54.91\scalebox{0.8}{±2.54} &       & 59.05\scalebox{0.8}{±0.43} &   62.12\scalebox{0.8}{±0.76}    &       & \underline{63.32\scalebox{0.8}{±0.34}} \\
    \bottomrule
    \end{tabular}%
}
\vspace{-0.5em}
\end{table*}

\subsection{Robustness Across Varying Condensation Ratios}
\label{sec_ratio}
In this section, we evaluate the impact of different condensation ratios on the robustness of \ModelName~across the five aforementioned datasets, using three distinct condensation ratios while keeping the attack budgets invariant. The attack budgets for structure, feature, and label attacks are set to 1\%, 10\%, and 20\% for the Cora, CiteSeer, and Ogbn-Arxiv datasets, and 0.1\%, 10\%, and 20\% for the PubMed and DBLP datasets. The results are presented in Table~\ref{table_ratio}.

From the results in Table~\ref{table_ratio}, we have the following three observations: (1) Across all the datasets and condensation ratios except for Ogbn-arxiv at ratios of 0.05\% and 0.50\%, \ModelName~achieves the best performance under poisoning attacks. This highlights the robustness of our proposed method in mitigating the negative impact of attacks on the graph condensation process. (2) On the Ogbn-arxiv dataset at condensation ratios of 0.25\% and 0.50\%, GEOM demonstrates superior performance. This is because trajectory-matching GC methods, such as SFGC and GEOM, achieve significantly better results on the clean Ogbn-Arxiv dataset than gradient-matching GC methods~\cite{liu2024gcondenser,gong2024gc4nc,sun2024gc}. Consequently, despite performance degradation under attack, they retain a relative advantage due to their initially superior performance on the clean dataset. However, SFGC and GEOM exhibit poor robustness on other datasets. (3) Denoising the graph during the preprocessing stage before GC has limited effectiveness, as these methods assume that the node features and labels are clean.
\begin{table*}[!ht]
  \centering
  \caption{Performance under \textit{different attack budgets} targeting the original graph. S, F, and L represent the attack budgets for structure, feature, and label, respectively. The best results are highlighted in \textbf{bold}, while the runner-up results are \underline{underlined}. OOM denotes out-of-memory (24GB).}
  \label{table_budget}
  \tabcolsep=0.1cm
  \resizebox{\linewidth}{!}{ 
    \begin{tabular}{c|c|cccccccccc}
    \toprule
    \multicolumn{1}{c|}{\makecell[c]{Dataset\\(Ratio\%)}} & Budget & GCond & SGDD  & SFGC  & GEOM  & GCDM  & \multicolumn{1}{c}{\makecell[c]{GCond\\(+S)}} & \multicolumn{1}{c}{\makecell[c]{GCond\\(+J)}} & \multicolumn{1}{c}{\makecell[c]{GCond\\(+K)}} & \multicolumn{1}{c}{RobGC} & \textbf{\ModelName} \\
    \midrule
    \multicolumn{1}{c|}{\multirow{6}[2]{*}{\makecell[c]{Cora\\(1.30\%)}}} & S.5\% & 60.59\scalebox{0.8}{±0.94} & 59.16\scalebox{0.8}{±1.59} & 52.52\scalebox{0.8}{±3.39} & 35.03\scalebox{0.8}{±1.48} & 49.51\scalebox{0.8}{±2.15} & 60.71\scalebox{0.8}{±6.21} & 61.90\scalebox{0.8}{±0.15} & 61.96\scalebox{0.8}{±2.33} & \underline{62.03\scalebox{0.8}{±1.80}} & \textbf{65.57\scalebox{0.8}{±0.90}} \\
          & S.10\% & 59.09\scalebox{0.8}{±2.15} & 53.68\scalebox{0.8}{±2.87} & 37.61\scalebox{0.8}{±1.32} & 32.97\scalebox{0.8}{±1.06} & 43.95\scalebox{0.8}{±3.90} & 60.72\scalebox{0.8}{±1.44} & 60.96\scalebox{0.8}{±3.05} & 60.25\scalebox{0.8}{±4.52} & \underline{61.24\scalebox{0.8}{±0.68}} & \textbf{64.02\scalebox{0.8}{±0.15}} \\
          & F.20\% & 69.36\scalebox{0.8}{±3.30} & 68.54\scalebox{0.8}{±1.63} & 37.49\scalebox{0.8}{±2.43} & 35.80\scalebox{0.8}{±1.31} & 49.55\scalebox{0.8}{±2.09} & 66.52\scalebox{0.8}{±2.36} & 65.23\scalebox{0.8}{±2.99} & 69.82\scalebox{0.8}{±3.03} & \underline{71.13\scalebox{0.8}{±0.81}} & \textbf{74.79\scalebox{0.8}{±1.12}} \\
          & F.30\% & 65.22\scalebox{0.8}{±0.79} & 58.34\scalebox{0.8}{±1.23} & 37.13\scalebox{0.8}{±2.42} & 35.60\scalebox{0.8}{±0.80} & 45.14\scalebox{0.8}{±4.78} & 57.69\scalebox{0.8}{±2.36} & 60.06\scalebox{0.8}{±2.09} & \underline{69.38\scalebox{0.8}{±5.14}} & 69.15\scalebox{0.8}{±0.42} & \textbf{72.36\scalebox{0.8}{±1.02}} \\
          & L.30\% & 64.97\scalebox{0.8}{±2.25} & \underline{68.02\scalebox{0.8}{±0.27}} & 32.77\scalebox{0.8}{±5.40} & 33.57\scalebox{0.8}{±0.96} & 45.61\scalebox{0.8}{±6.40} & 67.51\scalebox{0.8}{±0.44} & 61.59\scalebox{0.8}{±1.81} & 67.47\scalebox{0.8}{±2.72} & 65.99\scalebox{0.8}{±0.87} & \textbf{70.47\scalebox{0.8}{±1.19}} \\
          & L.40\% & 57.49\scalebox{0.8}{±1.33} & 61.97\scalebox{0.8}{±2.19} & 28.33\scalebox{0.8}{±0.93} & 32.63\scalebox{0.8}{±0.86} & 37.02\scalebox{0.8}{±1.77} & 57.50\scalebox{0.8}{±0.40} & 57.29\scalebox{0.8}{±2.16} & \underline{62.56\scalebox{0.8}{±1.46}} & 57.81\scalebox{0.8}{±3.69} & \textbf{63.89\scalebox{0.8}{±0.94}} \\
    \midrule
    \multicolumn{1}{c|}{\multirow{6}[2]{*}{\makecell[c]{CiteSeer\\(0.90\%)}}} & S.5\% & 48.51\scalebox{0.8}{±2.90} & 44.24\scalebox{0.8}{±2.82} & 35.65\scalebox{0.8}{±0.78} & 35.30\scalebox{0.8}{±1.61} & 42.32\scalebox{0.8}{±1.58} & \underline{49.28\scalebox{0.8}{±0.80}} & 49.21\scalebox{0.8}{±3.39} & 47.92\scalebox{0.8}{±2.34} & 48.96\scalebox{0.8}{±0.78} & \textbf{52.91\scalebox{0.8}{±2.00}} \\
          & S.10\% & 45.06\scalebox{0.8}{±1.72} & 42.90\scalebox{0.8}{±1.97} & 33.83\scalebox{0.8}{±2.14} & 34.67\scalebox{0.8}{±1.57} & 40.98\scalebox{0.8}{±4.25} & 42.82\scalebox{0.8}{±0.74} & 46.69\scalebox{0.8}{±4.28} & \underline{47.83\scalebox{0.8}{±4.92}} & 47.20\scalebox{0.8}{±0.60} & \textbf{52.72\scalebox{0.8}{±1.74}} \\
          & F.20\% & 58.26\scalebox{0.8}{±1.00} & 53.55\scalebox{0.8}{±3.37} & 39.56\scalebox{0.8}{±4.34} & 43.50\scalebox{0.8}{±3.31} & 40.65\scalebox{0.8}{±4.12} & 53.03\scalebox{0.8}{±2.73} & 59.32\scalebox{0.8}{±2.57} & 57.82\scalebox{0.8}{±4.41} & \underline{61.02\scalebox{0.8}{±1.51}} & \textbf{62.76\scalebox{0.8}{±0.56}} \\
          & F.30\% & 54.45\scalebox{0.8}{±2.85} & 48.81\scalebox{0.8}{±4.09} & 37.00\scalebox{0.8}{±3.87} & 42.20\scalebox{0.8}{±4.21} & 39.73\scalebox{0.8}{±2.80} & 55.50\scalebox{0.8}{±6.73} & \underline{59.49\scalebox{0.8}{±2.72}} & 55.56\scalebox{0.8}{±2.89} & 56.31\scalebox{0.8}{±2.50} & \textbf{62.41\scalebox{0.8}{±1.30}} \\
          & L.30\% & 51.69\scalebox{0.8}{±3.85} & 52.42\scalebox{0.8}{±3.25} & 45.36\scalebox{0.8}{±1.89} & 40.93\scalebox{0.8}{±0.68} & 39.42\scalebox{0.8}{±3.96} & 51.49\scalebox{0.8}{±5.81} & \underline{55.35\scalebox{0.8}{±3.11}} & 54.94\scalebox{0.8}{±5.37} & 49.40\scalebox{0.8}{±1.41} & \textbf{59.33\scalebox{0.8}{±2.35}} \\
          & L.40\% & 49.25\scalebox{0.8}{±1.20} & 47.63\scalebox{0.8}{±1.99} & 31.91\scalebox{0.8}{±1.25} & 33.03\scalebox{0.8}{±1.46} & 34.80\scalebox{0.8}{±0.66} & 51.38\scalebox{0.8}{±6.36} & 52.31\scalebox{0.8}{±4.05} & \underline{54.45\scalebox{0.8}{±2.18}} & 48.83\scalebox{0.8}{±5.45} & \textbf{59.20\scalebox{0.8}{±1.26}} \\
    \midrule
    \multicolumn{1}{c|}{\multirow{6}[2]{*}{\makecell[c]{PubMed\\(0.08\%)}}} & S.0.5\% & 56.54\scalebox{0.8}{±1.59} & 43.18\scalebox{0.8}{±2.15} & 54.21\scalebox{0.8}{±3.43} & 44.47\scalebox{0.8}{±1.35} & 46.77\scalebox{0.8}{±3.23} & \multirow{6}[2]{*}{OOM} & \underline{59.02\scalebox{0.8}{±1.17}} & 58.55\scalebox{0.8}{±1.26} & 57.41\scalebox{0.8}{±0.62} & \textbf{60.57\scalebox{0.8}{±0.23}} \\
          & S.1.0\% & 55.21\scalebox{0.8}{±3.89} & 43.55\scalebox{0.8}{±1.38} & 46.13\scalebox{0.8}{±2.97} & 44.07\scalebox{0.8}{±0.40} & 45.38\scalebox{0.8}{±4.46} &       & 58.09\scalebox{0.8}{±0.36} & 58.28\scalebox{0.8}{±0.41} & \underline{58.72\scalebox{0.8}{±1.23}} & \textbf{58.96\scalebox{0.8}{±0.78}} \\
          & F.20\% & 71.10\scalebox{0.8}{±1.01} & 50.85\scalebox{0.8}{±5.29} & 69.11\scalebox{0.8}{±3.82} & 44.93\scalebox{0.8}{±0.23} & 59.48\scalebox{0.8}{±6.67} &       & 68.53\scalebox{0.8}{±2.26} & 67.78\scalebox{0.8}{±2.59} & \underline{72.12\scalebox{0.8}{±2.20}} & \textbf{73.43\scalebox{0.8}{±0.79}} \\
          & F.30\% & \underline{67.26\scalebox{0.8}{±0.74}} & 47.72\scalebox{0.8}{±1.64} & 65.70\scalebox{0.8}{±0.80} & 60.70\scalebox{0.8}{±1.13} & 51.20\scalebox{0.8}{±6.05} &       & 63.88\scalebox{0.8}{±1.92} & 66.18\scalebox{0.8}{±2.10} & 66.62\scalebox{0.8}{±1.73} & \textbf{68.86\scalebox{0.8}{±0.34}} \\
          & L.30\% & 68.04\scalebox{0.8}{±1.83} & 48.62\scalebox{0.8}{±4.49} & 48.63\scalebox{0.8}{±1.22} & 49.00\scalebox{0.8}{±1.74} & 40.70\scalebox{0.8}{±2.36} &       & \underline{71.76\scalebox{0.8}{±2.26}} & 66.83\scalebox{0.8}{±3.97} & 65.79\scalebox{0.8}{±2.77} & \textbf{73.38\scalebox{0.8}{±1.58}} \\
          & L.40\% & 56.14\scalebox{0.8}{±2.64} & 46.22\scalebox{0.8}{±1.09} & 43.14\scalebox{0.8}{±3.14} & 45.30\scalebox{0.8}{±1.90} & 43.72\scalebox{0.8}{±4.71} &       & \underline{58.24\scalebox{0.8}{±1.09}} & 49.96\scalebox{0.8}{±2.79} & 58.03\scalebox{0.8}{±1.93} & \textbf{58.28\scalebox{0.8}{±0.31}} \\
    \midrule
    \multicolumn{1}{c|}{\multirow{6}[2]{*}{\makecell[c]{\makecell[c]{DBLP\\(0.11\%)}}}} & S.0.5\% & 62.31\scalebox{0.8}{±1.45} & 57.28\scalebox{0.8}{±1.84} & 44.87\scalebox{0.8}{±5.18} & 46.19\scalebox{0.8}{±0.66} & 47.75\scalebox{0.8}{±3.86} & \multirow{6}[2]{*}{OOM} & 55.44\scalebox{0.8}{±3.06} & 58.10\scalebox{0.8}{±4.99} & \underline{62.88\scalebox{0.8}{±2.79}} & \textbf{65.36\scalebox{0.8}{±1.81}} \\
          & S.1.0\% & 61.26\scalebox{0.8}{±0.62} & 58.21\scalebox{0.8}{±2.92} & 45.13\scalebox{0.8}{±3.62} & 46.43\scalebox{0.8}{±0.71} & 43.45\scalebox{0.8}{±6.30} &       & 54.44\scalebox{0.8}{±2.06} & 58.09\scalebox{0.8}{±6.10} & \underline{61.89\scalebox{0.8}{±3.12}} & \textbf{64.49\scalebox{0.8}{±0.27}} \\
          & F.20\% & 60.41\scalebox{0.8}{±2.14} & \underline{62.03\scalebox{0.8}{±0.53}} & 59.69\scalebox{0.8}{±1.06} & 57.69\scalebox{0.8}{±0.23} & 49.28\scalebox{0.8}{±3.30} &       & 56.81\scalebox{0.8}{±1.49} & 60.36\scalebox{0.8}{±4.63} & 60.39\scalebox{0.8}{±3.37} & \textbf{63.42\scalebox{0.8}{±1.86}} \\
          & F.30\% & 60.45\scalebox{0.8}{±1.02} & 55.12\scalebox{0.8}{±5.03} & 58.70\scalebox{0.8}{±1.66} & 56.49\scalebox{0.8}{±4.26} & 45.65\scalebox{0.8}{±2.55} &       & 55.14\scalebox{0.8}{±1.34} & \underline{61.77\scalebox{0.8}{±1.49}} & 61.51\scalebox{0.8}{±1.31} & \textbf{63.68\scalebox{0.8}{±0.53}} \\
          & L.30\% & \underline{70.34\scalebox{0.8}{±1.24}} & 64.52\scalebox{0.8}{±4.15} & 48.84\scalebox{0.8}{±1.81} & 50.22\scalebox{0.8}{±0.84} & 53.87\scalebox{0.8}{±1.30} &       & 61.77\scalebox{0.8}{±1.80} & 69.16\scalebox{0.8}{±1.10} & 69.39\scalebox{0.8}{±1.69} & \textbf{73.39\scalebox{0.8}{±0.53}} \\
          & L.40\% & 68.30\scalebox{0.8}{±1.15} & 63.76\scalebox{0.8}{±1.29} & 58.63\scalebox{0.8}{±2.11} & 51.58\scalebox{0.8}{±2.97} & 37.73\scalebox{0.8}{±6.47} &       & 60.52\scalebox{0.8}{±1.65} & 50.34\scalebox{0.8}{±1.85} & \underline{70.66\scalebox{0.8}{±2.60}} & \textbf{73.12\scalebox{0.8}{±0.84}} \\
    \bottomrule
    \end{tabular}%
}
\vspace{-0.5em}
\end{table*}

\subsection{Robustness Across Varying Attack Budgets}
\textls[10]{To evaluate the robustness of \ModelName~under varying attack budgets, we fix the condensation ratio to the lowest value for each dataset and adjust the attack budgets for structure, feature, and label attacks independently, while ensuring that the other attack budgets remain consistent with the settings outlined in Section~\ref{sec_ratio}. The experiments are conducted on the Cora, CiteSeer, PubMed, and DBLP datasets, and the results are shown in Table~\ref{table_budget}.}

\textls[-5]{As shown in Table~\ref{table_budget}: 
(1) \ModelName\ consistently outperforms all baselines across all datasets and attack budget variations. For example, on the CiteSeer dataset, \ModelName\ achieves improvements of approximately 3.98\% and 4.75\% over the runner-up with label perturbation ratios of 30\% and 40\%, respectively.  This highlights the robustness of \ModelName~against varying attack intensities.
(2) The robustness of \ModelName~remains stable regardless of the type of attack, effectively defending against structure, feature, and label perturbations. In contrast, RobGC performs well under the structure attacks but shows performance degradation as the intensity of feature and label attacks increases. 
(3) Gradient matching-based GC methods generally exhibit better robustness compared to trajectory matching and distribution matching GC methods.}

\subsection{Ablation Study}
To verify the effectiveness of each component, we compare different ablated versions of \ModelName~on Cora and CiteSeer datasets with the lowest condensation ratio: (1) \ModelName~(w/o ID), (2) \ModelName~(w/o C), and (3) \ModelName~(w/o D), which respectively remove the Intrinsic Dimension Manifold Regularization, Curvature-Aware Manifold Smoothing, and Class-Wise Manifold Decoupling modules. As the results are shown in Figure~\ref{fig_ablation}, all three modules contribute to the performance of \ModelName, with the Intrinsic Dimension Manifold Regularization module providing the greatest improvement.

\subsection{Classification Complexity Study}
Based on our analysis in this work, graph condensation acts as a process that reduces classification complexity, whereas attacks disrupt this property, leading to an increase in the classification complexity of the condensed graph.
This study aims to verify the ability of our proposed \ModelName~to mitigate this negative impact. As introduced in Section~\ref{sec_intro}, we measure classification complexity using three widely adopted metrics: Intrinsic Dimension (ID), Fisher's Discriminant Ratio (FDR), and Fraction of Hyperspheres Covering Data (FHC). Details of these metrics can be found in Appendix~\ref{apendix_exp}. 
The experiments are conducted in Cora (2.60\%) and CiteSeer (1.80\%) and the results are presented in Figure~\ref{fig_exp_complexity}. We can see that our proposed \ModelName\ effectively preserves the classification-complexity-reducing property of GC, contributing to the achievement of robust graph condensation.

\begin{figure*}[!t]
    \begin{minipage}[t]{0.33\textwidth}
        \centering
        \includegraphics[width=\textwidth]{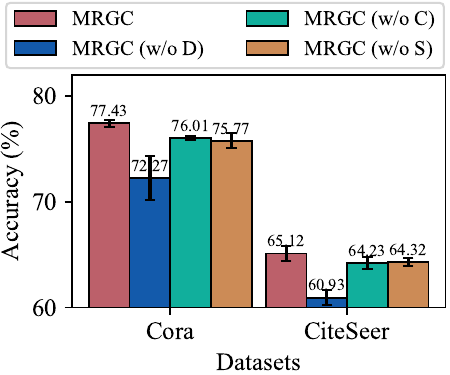}
        \vspace{-1em}
        \caption{Ablation study.}
        \label{fig_ablation}
    \end{minipage} 
    \begin{minipage}[t]{0.66\textwidth}
        \centering
        \includegraphics[width=\textwidth]{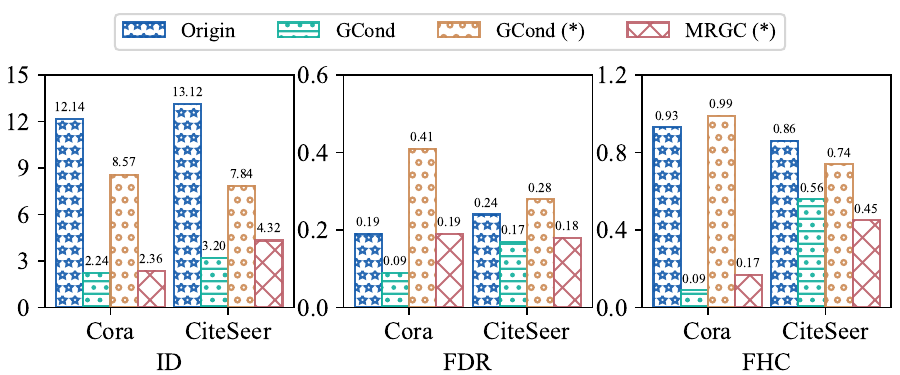}
        \vspace{-1em}
        \caption{Classification complexity(``*'' indicates attack).}
        \label{fig_exp_complexity}
    \end{minipage}\\
    \begin{minipage}[t]{0.33\textwidth}
        \centering
        \includegraphics[width=\textwidth]{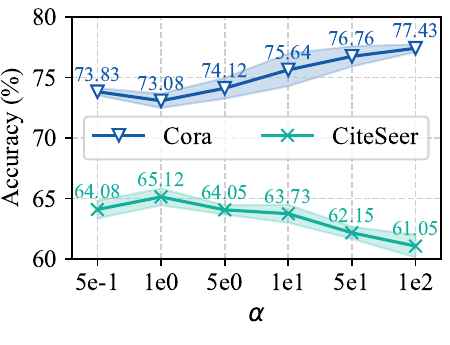}
    \end{minipage}
    \begin{minipage}[t]{0.33\textwidth}
        \centering
        \includegraphics[width=\textwidth]{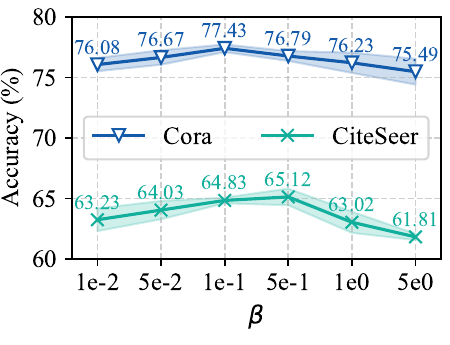}
        \label{fig_hyperparameter}
    \end{minipage}
    \begin{minipage}[t]{0.33\textwidth}
        \centering
        \includegraphics[width=\textwidth]{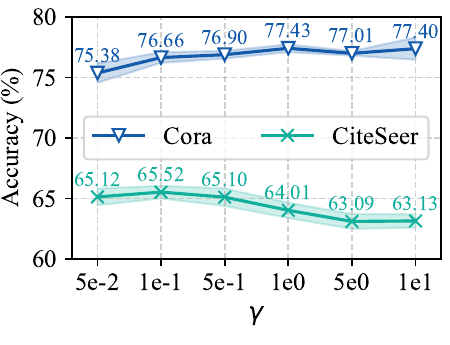}
    \end{minipage}
    \vspace{-1em}
    \caption{Hyperparameters study.}
    
\end{figure*}

\subsection{Hyperparameter Sensitivity Study}
In this section, we explore the sensitivity of the hyperparameters $\alpha$, $\beta$, and $\gamma$ for \ModelName. In the experiments, we vary the values of $\alpha$, $\beta$, and $\gamma$ on the Cora and CiteSeer datasets with the lowest condensation ratio to examine their impact on model performance. The results are shown in Figure~\ref{fig_hyperparameter}. As we can see, \ModelName~performs better when appropriate values are chosen for all hyperparameters, and a wide range of hyperparameters can still yield satisfactory results.

\section{Conclusion}
\label{sec:con}
In this work, we explore GC's robustness against adversarial attacks on features, structures, and labels. Through empirical and theoretical analysis, we discover that GC functions as an intrinsic-dimension-reducing mechanism that creates graphs with lower classification complexity, while this property is susceptible to adversarial attacks. To protect this critical characteristic and improve the robustness of GC, we adopt the geometric perspective of the graph data manifold and propose \ModelName, a novel manifold-constrained robust graph condensation framework. Specifically, we introduce three modules that constrain the intrinsic dimension, manifold curvature, and class manifold overlap of the condensed graph, thereby maintaining the classification-complexity-reducing property. Experiments demonstrate \ModelName's effectiveness against universal attacks. One limitation is that our focus node classification task, with the graph classification task left for our future work.


\section*{Acknowledgements}
The corresponding author is Jianxin Li. The authors of this paper are supported by the National Natural Science Foundation of China through grants No.62225202, and No.62302023. We owe sincere thanks to all authors for their valuable efforts and contributions.

\newpage
{\small{
\bibliographystyle{plain}
\bibliography{ref}}}
\newpage
\appendix
\onecolumn

\section{Proofs and Derivations}
\label{appendix_proof}
\subsection{Proof of Theorem~\ref{prop_id_reduce}}
Here, we first restate the theorem:
\begin{theorem}
    Given a graph $\mathcal{G}$ with $n$ nodes, let $\mathcal{G^\prime}$ with $n^{\prime}$ nodes denote the much smaller synthetic graph generated through graph condensation, which is comparable to $\mathcal{G}$ in terms of training GNNs. We have the following:
\begin{equation}
    \mathrm{ID(\mathcal{G^{\prime}})} < \mathrm{ID(\mathcal{G})},
\end{equation}
where $\mathrm{ID}(\cdot)$ denotes the intrinsic dimension of graph data.
\end{theorem}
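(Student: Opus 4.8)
The plan is to combine a hard combinatorial bound on $\mathrm{ID}(\mathcal{G}')$ coming from the small size of the condensed graph with a ``task-sufficiency'' bound coming from the hypothesis that $\mathcal{G}'$ is comparable to $\mathcal{G}$ for training GNNs, and then to argue that the original graph genuinely exceeds both. First I would fix a working notion of intrinsic dimension consistent with the one used later in Section~\ref{sec_dimension}: identify $\mathrm{ID}(\mathcal{H})$ with the dimension of the data manifold spanned by the post-message-passing embeddings $\mathbf{Z}_{\mathcal{H}}=(\mathbf{A}_{\mathcal{H}})^2\mathbf{X}_{\mathcal{H}}$, which under the standard local linearization of the manifold assumption equals the number of non-negligible eigenvalues of the (local) covariance $\mathbf{\Sigma}_{\mathbf{Z}_{\mathcal{H}}}$. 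This reduces the statement to comparing effective ranks of embedding covariances.

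Next I would establish two upper bounds for $\mathcal{G}'$. (i) Cardinality bound: the centered embedding matrix of $\mathcal{G}'$ has at most $\min(n'-1,d')$ nonzero singular values, so $\mathrm{ID}(\mathcal{G}')\le\min(n'-1,d')$ --- the elementary fact that $n'$ points lie in an affine subspace of dimension $n'-1$. (ii) Task-sufficiency bound: because $\mathcal{G}'$ reproduces the training behaviour of GNNs on $\mathcal{G}$, the embeddings $\mathbf{Z}'$ only need to span the directions that influence the optimizer's updates; for a $c$-class GNN with bottleneck width $h$, the matched gradient/representation statistics along the training trajectory are themselves low rank (governed by $c$ and $h$, not by the ambient $d'$), so a minimal condensed graph realizing them has $\mathrm{span}(\mathbf{Z}')$ of dimension at most that matched rank $r_{\mathrm{task}}$. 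I would make (ii) precise by writing the condensation optimum as a point at which the matched quantity (e.g.\ the class-wise gradient matrix in gradient-matching GC) is attained, and reading off that the irrelevant feature directions present in $\mathbf{X}$ can be, hence at the optimum are, discarded. Combining, $\mathrm{ID}(\mathcal{G}')\le\min\{\,n'-1,\ d',\ r_{\mathrm{task}}\,\}$.

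Finally I would lower-bound $\mathrm{ID}(\mathcal{G})$: the original graph contains $n\gg n'$ nodes carrying task-irrelevant variation --- feature noise, degree and neighbourhood heterogeneity, within-class spread --- so its embedding covariance has non-negligible eigenvalues well beyond the $r_{\mathrm{task}}$ task-relevant directions, giving $\mathrm{ID}(\mathcal{G})>r_{\mathrm{task}}\ge\mathrm{ID}(\mathcal{G}')$ and hence the strict inequality. The main obstacle is step (ii): converting the informal phrase ``comparable in terms of training GNNs'' into a quantitative hypothesis strong enough to bound $\mathrm{rank}(\mathbf{Z}')$ --- one must commit to comparability meaning approximate equality of the training-trajectory gradients (or of the learned representations), and then verify that realizing those matched quantities does not covertly force full-rank embeddings. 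A secondary subtlety is ensuring the comparison is non-vacuous, i.e.\ that $\mathcal{G}$ really sits above the task rank; here the empirical drop reported in Figure~\ref{fig_intro_complexity} supports the modelling assumption, and I would be candid that the statement is a theorem about the idealized condensation objective rather than about an arbitrary condensed graph.
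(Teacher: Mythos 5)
Your proposal takes a genuinely different route from the paper's proof, but it has gaps that prevent it from being a complete argument, whereas the paper's argument is closed (albeit under strong assumptions).

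The paper does not reason about ranks or effective dimensions of the embedding covariance at all. It invokes a sample-complexity-style corollary from Sutton et al.\ which upper-bounds the probability of correctly classifying a class $X$ by $1-(1-2^{-(\mathrm{ID}(X)+1)})^{k}$, where $k$ is the number of training samples. The proof then posits three assumptions: (i) a Graph Condensation Equivalence assumption, so the GNN parameters learned on $\mathcal{G}$ and $\mathcal{G}'$ coincide and hence the per-class classification probabilities $P(g_{\boldsymbol\Phi}(n_c)=l_c)$ are equal; (ii) a decomposition of multi-class classification into per-class binary problems; and (iii) an optimality assumption under which the Sutton bound is attained with equality for both graphs. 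With equality in hand, one can solve algebraically for $\mathrm{ID}(\mathcal{G}_c)$ and $\mathrm{ID}(\mathcal{G}'_c)$, subtract, and observe that since $k'_c\ll k_c$ and $0<1-P<1$, the resulting $\log$ ratio is strictly positive, giving $\mathrm{ID}(\mathcal{G}'_c)<\mathrm{ID}(\mathcal{G}_c)$ per class; an averaging result for local-to-global ID estimators then yields the graph-level statement. The entire lever is the disparity $k'_c\ll k_c$ in per-class training sample counts combined with equal classification probability, which is an entirely different mechanism than your rank argument.

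The gap in your plan is that neither branch of the comparison is actually closed. Your cardinality bound $\mathrm{ID}(\mathcal{G}')\le n'-1$ is true but useless without the companion fact $\mathrm{ID}(\mathcal{G})>n'-1$, which can easily fail (if $\mathcal{G}$ already has low intrinsic dimension and $n'$ is not tiny, the bound says nothing). That pushes all the weight onto your task-sufficiency bound, and as you yourself flag, you have no mechanism that forces the \emph{optimal} condensed graph to discard task-irrelevant directions: ``the irrelevant directions \emph{can} be discarded, hence at the optimum \emph{are} discarded'' is not a valid inference, since the condensation objective typically does not penalize extra embedding rank, so an optimum may well retain it. Similarly, your lower bound $\mathrm{ID}(\mathcal{G})>r_{\mathrm{task}}$ is asserted from the empirical observation in Figure~\ref{fig_intro_complexity} rather than derived; it needs to appear as an explicit hypothesis if the argument is to go through. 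The paper's assumptions are also strong, but they at least convert the learnability inequality into an equality that can be inverted, which is what makes the per-class comparison an actual derivation rather than a heuristic. If you want to pursue your route, you would need to (a) state a spectral-decay hypothesis on $\mathcal{G}$ to make the lower bound precise, and (b) add a regularization or minimality principle to the condensation objective so that rank minimality of $\mathbf{Z}'$ is forced at the optimum, at which point the argument would go through but the theorem would be about a modified, explicitly regularized objective rather than generic GC.
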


\begin{proof}
We begin by restating the corollary derived from~\cite{sutton2023relative}, which provides bounds on the probability of successfully learning a class $Y$. Specifically:
\begin{corollary}
\label{eq_corollary}
    With the training sample $\{x_{i} \sim X\}_{i=1}^{k}$, the probability of successfully learning the class $X$ is bounded by:
    \begin{equation}
    \label{eq_upper_bound}
        P(g(x)=l_{X})\leq 1-(1-\frac{1}{2^{\text{ID}(X)+1}})^{k},
    \end{equation}
    where $k$ represents the number of the training samples, $P(g(x)=l_{X})$ denotes the probability of correctly predicting the label of sample $x$ with the trained classifier $g$, and $\text{ID}(X)$ is the intrinsic dimension of class $X$.
\end{corollary}

These bounds provide insights into the relationship between the intrinsic dimension of the class and the probability of correct classification. 
Here, we first make the following reasonable assumptions:
\begin{assumption}
    \label{assumptino_equivalence}
    (Graph Condensation Equivalence Assumption). 
    The original and the condensed graphs exhibit equivalent capability in training GNNs~\cite{gao2024graph}.
\end{assumption}

\begin{assumption}
    \label{assumptino_decomposition}
    (Node Classification Decomposition Assumption). 
    The multi-class classification task for nodes on the graph can be approximated as the union of binary classification tasks for each node type~\cite{shiraishi2011statistical}.
\end{assumption}

\begin{assumption}
    \label{assumption_optimal}
    (Optimal Classification Assumption). 
    Under assumption~\ref{assumptino_decomposition}, there exists a GNN with optimized parameters that can achieve optimal classification performance for each class on the graph, reaching the upper bound specified in Eq.~\eqref{eq_upper_bound}.
\end{assumption}

Here, we discuss the probability of classifying the nodes with the label $c$ in the graph $\mathcal{G}$. Under assumption~\ref{assumptino_decomposition} and assumption~\ref{assumption_optimal}, there exists a GNN classifier $g_{\boldsymbol{\Phi}}$ with parameter $\boldsymbol{\Phi}$, having the following:
\begin{equation}
\label{eq_upper_origin}
    P(g_{\boldsymbol{\Phi}}(n_{c})=l_c)= 1-(1-\frac{1}{2^{\text{ID}(\mathcal{G}_{c})+1}})^{k_{c}},
\end{equation}
where $\mathcal{G}_{c}$ denotes the set of nodes belonging to class-$c$ in graph $\mathcal{G}$, $n_{c}$ represents a test node belonging to class-$c$, $k_{c}$ is the number of training nodes in $\mathcal{G}_{c}$, and $\text{ID}(\mathcal{G}_{c})$ denotes the intrinsic dimension of $\mathcal{G}_{c}$. 
Similarly, for the condensed graph $\mathcal{G}^{\prime}$, we have the following:
\begin{equation}
\label{eq_upper_condense}
    P(g_{\boldsymbol{\Phi}^{\prime}}(n_{c})=l_c)= 1-(1-\frac{1}{2^{\text{ID}(\mathcal{G}^{\prime}_{c})+1}})^{k^{\prime}_{c}}.
\end{equation}

We can extract the intrinsic dimension $\text{ID}(\mathcal{G}_{c})$ and $\text{ID}(\mathcal{G}^{\prime}_{c})$ from Eq.~\eqref{eq_upper_origin} and Eq.~\ref{eq_upper_condense} as follows:
\begin{align}
    \label{eq_idg}
    \text{ID}(\mathcal{G}_{c})&=-\left(1+\log(1-\sqrt[k_{c}]{1- P(g_{\boldsymbol{\Phi}}(n_{c})=l_c)}\right), \\
    \label{eq_idg'}
    \text{ID}(\mathcal{G}^{\prime}_{c})&=-\left(1+\log(1-\sqrt[k_{c}^{\prime}]{1- P(g_{\boldsymbol{\Phi}^{\prime}}(n_{c})=l_c)}\right).
\end{align}
The above expressions establish a mathematical link between the intrinsic dimension of a class of nodes and the probability of successful classification, providing a basis for comparing the original and synthetic graphs.

We subtract Eq.~\eqref{eq_idg} and Eq.~\eqref{eq_idg'}, yielding:
\begin{align}
    \text{ID}(\mathcal{G}_{c})-\text{ID}(\mathcal{G}^{\prime}_{c}) &= -\left(1+\log(1-\sqrt[k_{c}]{1-P(g_{\boldsymbol{\Phi}}(n_{c})=l_c)}\right) \\ & + \left(1+\log(1-\sqrt[k_{c}^{\prime}]{1-P(g_{\boldsymbol{\Phi}^{\prime}}(n_{c})=l_c)}\right) \\
    &=-\log\left((1-\sqrt[k_{c}]{1-P(g_{\boldsymbol{\Phi}}(n_{c})=l_c)}\right) \\& + \log\left(1-\sqrt[k_{c}^{\prime}]{1-P(g_{\boldsymbol{\Phi}^{\prime}}(n_{c})=l_c)}\right) \\
    \label{eq_id_sub}
    &=\log\left(\frac{1-\sqrt[k_{c}^{\prime}]{1-P(g_{\boldsymbol{\Phi}^{\prime}}(n_{c})=l_c)}}{1-\sqrt[k_{c}]{1-P(g_{\boldsymbol{\Phi}}(n_{c})=l_c)}}\right).
\end{align}
Based on assumption~\ref{assumptino_equivalence}, we have $\boldsymbol{\Phi} = \boldsymbol{\Phi}^{\prime}$. Additionally, according to the definition of graph condensation, it follows that $k_{c}^{\prime} \ll k_{c}$. Thus we have:
\begin{equation}
    \log\left(\frac{1-\sqrt[k_{c}^{\prime}]{1-P(g_{\boldsymbol{\Phi}^{\prime}}(n_{c})=l_c)}}{1-\sqrt[k_{c}]{1-P(g_{\boldsymbol{\Phi}}(n_{c})=l_c)}}\right)>0.
\end{equation}
This result implies that $\text{ID}(\mathcal{G}_{c}^{\prime}) < \text{ID}(\mathcal{G}_{c})$. Similarly, we can prove that the intrinsic dimension of the nodes in each class decreases after graph condensation. According to~\cite{horvat2022estimating,campadelli2015intrinsic}: \textit{``a local ID estimator can be used as a global ID estimator by simply averaging over different samples''}. Therefore, when the intrinsic dimension of node sets of each class in the graph are reduced after graph condensation, we obtain $\text{ID}(\mathcal{G}^{\prime}) < \text{ID}(\mathcal{G})$.

This completes the proof.
\end{proof}

\subsection{Proof of Theorem~\ref{prop_id_increase}}
Here, we first restate the theorem:
\begin{theorem}
    Based on theorem~\ref{prop_id_reduce}, let $\mathcal{G^{\prime}_{*}}$ denotes the synthetic graph generated through graph condensation, where the original graph $\mathcal{G_{*}}$ is under attack. Then we have:
\begin{equation}
    \mathrm{ID(\mathcal{G^{\prime}})} < \mathrm{ID(\mathcal{G^{\prime}_{*}})}.
\end{equation}
\end{theorem}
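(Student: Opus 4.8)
The plan is to mimic the structure of the proof of Theorem~\ref{prop_id_reduce}, reusing Corollary~\ref{eq_corollary} and Assumptions~\ref{assumptino_equivalence}--\ref{assumption_optimal}, but now comparing two \emph{condensed} graphs: $\mathcal{G}'$ obtained from the clean graph and $\mathcal{G}'_{*}$ obtained from the attacked graph $\mathcal{G}_{*}$. First I would invoke Assumption~\ref{assumptino_decomposition} to reduce the multi-class problem to per-class binary problems, and Assumption~\ref{assumption_optimal} to assert that for each class $c$ there is an optimally-trained GNN achieving the upper bound of Eq.~\eqref{eq_upper_bound}. Writing this bound for a test node of class $c$ under the two condensation pipelines gives
\begin{align}
    P(g_{\boldsymbol{\Phi}'}(n_{c})=l_c) &= 1-\Bigl(1-\tfrac{1}{2^{\text{ID}(\mathcal{G}'_{c})+1}}\Bigr)^{k'_{c}}, \\
    P(g_{\boldsymbol{\Phi}'_{*}}(n_{c})=l_c) &= 1-\Bigl(1-\tfrac{1}{2^{\text{ID}((\mathcal{G}'_{*})_{c})+1}}\Bigr)^{k'_{c}},
\end{align}
where I would argue the number of condensed training nodes per class is the same ($k'_c$) in both pipelines, since the condensation ratio and label distribution of the synthetic graph are fixed by design and not by the attack.

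Next, exactly as in Eqs.~\eqref{eq_idg}--\eqref{eq_id_sub}, I would solve for each intrinsic dimension and subtract, obtaining
\begin{equation}
    \text{ID}((\mathcal{G}'_{*})_{c}) - \text{ID}(\mathcal{G}'_{c}) = \log\!\left(\frac{1-\sqrt[k'_{c}]{1-P(g_{\boldsymbol{\Phi}'}(n_{c})=l_c)}}{1-\sqrt[k'_{c}]{1-P(g_{\boldsymbol{\Phi}'_{*}}(n_{c})=l_c)}}\right).
\end{equation}
Because the two denominators/numerators now share the same exponent $k'_c$, the sign of this difference is governed purely by the ordering of the two success probabilities: the expression is positive iff $P(g_{\boldsymbol{\Phi}'}(n_{c})=l_c) > P(g_{\boldsymbol{\Phi}'_{*}}(n_{c})=l_c)$. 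So the crux reduces to showing that a GNN trained on the condensate of a \emph{clean} graph classifies class-$c$ test nodes of the original distribution at least as accurately as one trained on the condensate of an \emph{attacked} graph.

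I would establish this last inequality by appealing to the problem formulation and Assumption~\ref{assumptino_equivalence}: the condensed graph $\mathcal{G}'$ is (via the condensation objective) equivalent to the clean $\mathcal{G}$ for training, so $g_{\boldsymbol{\Phi}'}$ inherits $\mathcal{G}$'s generalization to the clean test set; whereas $\mathcal{G}'_{*}$ distills a corrupted graph, and any poisoning attack is, by definition, one that does not improve — and generically degrades — downstream test accuracy (this can be stated as a mild monotonicity assumption on the effect of adversarial poisoning, consistent with the empirical evidence in Figures~\ref{fig_intro_attack} and~\ref{fig_intro_complexity}). Hence $P(g_{\boldsymbol{\Phi}'}(n_{c})=l_c) \ge P(g_{\boldsymbol{\Phi}'_{*}}(n_{c})=l_c)$ for every class $c$, so $\text{ID}(\mathcal{G}'_{c}) \le \text{ID}((\mathcal{G}'_{*})_{c})$ for each $c$. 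Finally, using the same global-from-local ID averaging argument cited from~\cite{horvat2022estimating,campadelli2015intrinsic} as in the previous proof, the per-class inequalities aggregate to $\text{ID}(\mathcal{G}') < \text{ID}(\mathcal{G}'_{*})$, completing the proof.

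The main obstacle I expect is justifying rigorously that poisoning necessarily lowers (or does not raise) the success probability $P(g_{\boldsymbol{\Phi}'_{*}}(n_c)=l_c)$ relative to the clean pipeline — in full generality an adversary could in principle be ``helpful,'' so this step must be carried by a stated assumption (that attacks are degrading) plus the equivalence assumption, rather than derived from first principles; care is also needed to argue $k'_c$ truly matches across the two pipelines so that the clean cancellation of exponents in the $\log$ ratio goes through.
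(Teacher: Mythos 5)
Your proposal matches the paper's proof essentially step for step: it reuses Corollary~\ref{eq_corollary} and the three assumptions, takes $(k'_c)_{*}=k'_c$ so the exponents cancel in the log ratio, reduces the sign question to the ordering of per-class success probabilities, and settles that ordering by a ``poisoning degrades the condensate'' premise — which is exactly what the paper encodes as its Assumption~\ref{assumption_attack} — before aggregating per-class inequalities via the same local-to-global ID averaging citation. The obstacle you flag at the end (that the degradation step must be carried by an explicit assumption rather than derived) is precisely how the paper resolves it, so your plan is correct and follows the paper's route.
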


\begin{proof}
We make the following reasonable assumption:
\begin{assumption}
\label{assumption_attack}
Adversarial attacks in the original graph will lead to a decrease in the quality of the condensed graph.
\end{assumption}

Based on the proof of theorem~\ref{prop_id_reduce}, for the nodes of class $c$ in the condensed graph, we have the following equation:
\begin{equation}
    \text{ID}((\mathcal{G}_{*}^{\prime})_{c})-\text{ID}(\mathcal{G}^{\prime}_{c}) = \log\left(\frac{1-\sqrt[k_{c}^{\prime}]{1-P(g_{\boldsymbol{\Phi}^{\prime}}(n_{c})=l_c)}}{1-\sqrt[(k_{c}^{\prime})_{*}]{1-P(g_{\boldsymbol{\Phi}_{*}^{\prime}}(n_{c})=l_c)}}\right).
\end{equation}
Due to $(k_c^{\prime})_{*} = k_c^{\prime}$ and $P(g_{\boldsymbol{\Phi}_{*}^{\prime}}(n_{c})) < P(g_{\boldsymbol{\Phi}^{\prime}}(n_{c}))$ under assumption~\ref{assumption_attack}, we can get:
\begin{equation}
    \log\left(\frac{1-\sqrt[k_{c}^{\prime}]{1-P(g_{\boldsymbol{\Phi}^{\prime}}(n_{c})=l_c)}}{1-\sqrt[(k_{c}^{\prime})_{*}]{1-P(g_{\boldsymbol{\Phi}_{*}^{\prime}}(n_{c})=l_c)}}\right)>0,
\end{equation}
which indicates $\text{ID}((\mathcal{G}_{*}^{\prime})_{c})>\text{ID}(\mathcal{G}^{\prime}_{c})$. Similar to the proof of theorem~\ref{prop_id_reduce}, we can get $\text{ID}(\mathcal{G}_{*}^{\prime})>\text{ID}(\mathcal{G}^{\prime})$ according to~\cite{horvat2022estimating,campadelli2015intrinsic}.

This completes the proof.
\end{proof}

\subsection{Solving Eq.~\eqref{eq_lagrange}}

We restate the Eq.~\eqref{eq_lagrange}:
\begin{equation}
    \mathcal{L}(\mathbf{u_{i}},\lambda)= \sum_{j=1}^{k}((\mathbf{Z}_{i}^{j}-\mathbf{c}_{i})\mathbf{u}_{i})^{2} - \lambda(\mathbf{u}_{i}^{\top}\mathbf{u}_{i}-1),
\end{equation}
where $\lambda$ is the Lagrange multiplier.
Let $\mathbf{Y}=\mathbf{Z}_{i}-\mathbf{c}_{i}\mathbf{1}^{\top}$. Using this substitution, Eq.~\eqref{eq_lagrange} can be reformulated as:
\begin{equation}
    \mathcal{L}(\mathbf{u_{i}},\lambda)= \mathbf{u}_{i}^{\top}\mathbf{Y}^{\top}_{i}\mathbf{Y}\mathbf{u}_{i} -\lambda(\mathbf{u}_{i}^{\top}\mathbf{u}_{i} -1).
\end{equation}
Thus, the original problem can be treated as the following optimization problem:
\begin{equation}
\min_{\mathbf{u}_{i}}\mathbf{u}_{i}^{\top}(\mathbf{Y}^{\top}\mathbf{Y})\mathbf{u}_{i}, \quad\text{subject to } \mathbf{u}_{i}^{\top}\mathbf{u}_{i}=1.
\end{equation}
Since $\mathbf{Y}^{\top}\mathbf{Y}$ is evidently a symmetric matrix, the Karush–Kuhn–Tucker (KKT) conditions for this optimization problem are expressed as:
\begin{equation}
\left\{ 
\begin{array}{ll}
     &2\mathbf{Y}^{\top}\mathbf{Y} \mathbf{u}_{i}-2\lambda \mathbf{u}_{i}=0 \\
    &\mathbf{u}_{i}^{\top}\mathbf{u}_{i}-1=0
\end{array},
\right.
\end{equation}
indicating that $\mathbf{u}_{i}$ must satisfy the eigenvalue equation. Therefore, the solution for $\mathbf{u}_{i}$ corresponds to the eigenvectors of $\mathbf{Y}^{\top}\mathbf{Y}$, subject to the unit norm constraint $\mathbf{u}_{i}^{\top}\mathbf{u}_{i} = 1$. Consequently, the problem described in Eq.~\eqref{eq_lagrange} is equivalent to the following:
\begin{align}
    \mathcal{L}(\mathbf{u_{i}},\lambda)& = \mathbf{u}_{i}^{\top}\mathbf{Y}^{\top}_{i}\mathbf{Y}\mathbf{u}_{i} -\lambda(\mathbf{u}_{i}^{\top}\mathbf{u}_{i} -1) \\
    & =\mathbf{u}_{i}^{\top} \lambda \mathbf{u}_{i} - 0 \\
    & = \lambda
\end{align}
This result implies that the solution for $\mathbf{u}_{i}$ corresponds to the eigenvectors of $\mathbf{Y}^{\top}\mathbf{Y}$ associated with its smallest eigenvalues, normalized to satisfy the unit norm constraint. This completes the derivation.

\subsection{Proof of Proposition~\ref{prop_analytic_solutioin}}
Here, we first restate the proposition:
\begin{proposition}
    By fitting the quadratic hypersurface $f_{\boldsymbol{\theta}}(\mathbf{o})$ in the tangent space via  $\min_{\boldsymbol{\theta}}\sum_{j=1}^{k}(\frac{1}{2}\mathbf{o}_{j}^{\top}\boldsymbol{\theta}\mathbf{o}_{j}-t_{j})^{2}$, where $t_{j}=(\mathbf{z}_{i}^{j}-\mathbf{z}_{i})\cdot \mathbf{u}_{i}$ represents the projection along the normal vector $\mathbf{u}_{i}$, the Gaussian curvature of the manifold $\mathrm{M}((\mathcal{G}_{*}^{\prime})^{c})$ at node $i$ is given by $\mathrm{K}_{\mathrm{G}}(i) = \frac{1}{2}\mathrm{det}(\mathrm{Mat}(\mathbf{Q}^{-1}\mathbf{p}))$, where $\mathbf{Q}\in \mathbb{R}^{m^{2}\times m^{2}}$ is a fourth-order tensor expressed as a matrix with entries $\mathbf{Q}_{a,b,c,d}=\sum_{j=1}^{k}\mathbf{o}_{ja}\mathbf{o}_{jb}\mathbf{o}_{jc}\mathbf{o}_{jd}$. The vector $\mathbf{p}\in \mathbb{R}^{m^{2}\times 1}$ is a second-order tensor with entries $\mathbf{p}_{a,b}=\sum_{j=1}^{k}t_{j}\mathbf{o}_{ja}\mathbf{o}_{jb}$. The operator $\mathrm{Mat}(\cdot)$ reshapes $\mathbf{Q}^{-1}\mathbf{p}$ into an $m\times m$ matrix, and $\mathrm{det}(\cdot)$ denotes the determinant.
\end{proposition}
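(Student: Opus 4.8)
\textbf{Proof proposal for Proposition~\ref{prop_analytic_solutioin}.}

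The plan is to recognize the fitting problem as an ordinary linear least-squares problem in the entries of the symmetric parameter matrix $\boldsymbol{\Theta}$, solve it via the normal equations, and then identify the Gaussian curvature with the determinant of the fitted Hessian. First I would observe that although $f_{\boldsymbol{\Theta}}(\mathbf{o}) = \tfrac12\mathbf{o}^\top\boldsymbol{\Theta}\mathbf{o}$ is quadratic in $\mathbf{o}$, it is \emph{linear} in the unknown parameters $\Theta_{a,b}$. Writing $\tfrac12\mathbf{o}_j^\top\boldsymbol{\Theta}\mathbf{o}_j = \tfrac12\sum_{a,b} o_{ja}o_{jb}\Theta_{a,b}$, the residual for neighbor $j$ is $r_j = \tfrac12\sum_{a,b}o_{ja}o_{jb}\Theta_{a,b} - t_j$. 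Vectorizing $\boldsymbol{\Theta}$ into $\mathrm{vec}(\boldsymbol{\Theta})\in\mathbb{R}^{(d'-1)^2}$ and defining the design vector $\mathbf{a}_j$ with entries $(\mathbf{a}_j)_{a,b} = o_{ja}o_{jb}$, the objective becomes $\sum_j (\tfrac12\mathbf{a}_j^\top\mathrm{vec}(\boldsymbol{\Theta}) - t_j)^2$.

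Next I would write down and solve the normal equations. Differentiating with respect to $\mathrm{vec}(\boldsymbol{\Theta})$ and setting the gradient to zero gives $\big(\tfrac14\sum_j \mathbf{a}_j\mathbf{a}_j^\top\big)\mathrm{vec}(\boldsymbol{\Theta}) = \tfrac12\sum_j t_j\mathbf{a}_j$. The matrix $\sum_j\mathbf{a}_j\mathbf{a}_j^\top$ has entries $\sum_j o_{ja}o_{jb}o_{jc}o_{jd} = \mathbf{Q}_{a,b,c,d}$, exactly the fourth-order tensor $\mathbf{Q}$ in the statement, and the right-hand side has entries $\sum_j t_j o_{ja}o_{jb} = \mathbf{p}_{a,b}$. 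Hence $\tfrac14\mathbf{Q}\,\mathrm{vec}(\boldsymbol{\Theta}) = \tfrac12\mathbf{p}$, so $\mathrm{vec}(\boldsymbol{\Theta}) = 2\mathbf{Q}^{-1}\mathbf{p}$, and reshaping back, $\boldsymbol{\Theta} = 2\,\mathrm{Mat}(\mathbf{Q}^{-1}\mathbf{p})$. (I would note that the constant factors may be absorbed depending on the precise normalization convention; the essential point is the proportionality $\boldsymbol{\Theta}\propto\mathrm{Mat}(\mathbf{Q}^{-1}\mathbf{p})$.)

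Finally, I would connect $\boldsymbol{\Theta}$ to the Gaussian curvature. Since the neighbors are expressed in the orthonormal tangent basis $\{\boldsymbol{\xi}_1,\dots,\boldsymbol{\xi}_{d'-1}\}$ and $t_j$ is the signed normal displacement, the fitted graph $\mathbf{o}\mapsto f_{\boldsymbol{\Theta}}(\mathbf{o})$ is a Monge patch for the manifold near node $i$, with vanishing first-order term (the tangent plane is the $\mathbf{o}$-space by construction, so $\nabla f_{\boldsymbol{\Theta}}(\mathbf{0}) = \mathbf{0}$). For a Monge patch at a point where the gradient vanishes, the second fundamental form coincides with the Hessian of $f$, and the Gaussian curvature equals $\det(\mathrm{Hess}\,f_{\boldsymbol{\Theta}}(\mathbf{0}))/(1+\|\nabla f\|^2)^{(d'+1)/2} = \det(\boldsymbol{\Theta})$, since $\mathrm{Hess}\,f_{\boldsymbol{\Theta}}(\mathbf{0}) = \boldsymbol{\Theta}$. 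Substituting $\boldsymbol{\Theta} = 2\,\mathrm{Mat}(\mathbf{Q}^{-1}\mathbf{p})$ yields $K(i) = \det(2\,\mathrm{Mat}(\mathbf{Q}^{-1}\mathbf{p}))$; absorbing the scalar appropriately gives the stated closed form $K(i) = 2\,\det(\mathrm{Mat}(\mathbf{Q}^{-1}\mathbf{p}))$ up to the dimensional normalization constant.

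\textbf{Main obstacle.} The routine part is the least-squares algebra; the delicate part is justifying the differential-geometric identification of $\det(\boldsymbol{\Theta})$ with the Gaussian curvature. This requires care in $d'-1 > 2$ dimensions, where ``Gaussian curvature'' must be interpreted as the Gauss--Kronecker curvature (the determinant of the shape operator / second fundamental form in the orthonormal tangent frame), and one must verify that the chosen tangent frame is indeed orthonormal and that the normal direction $\mathbf{u}_i$ is consistent with the sign conventions used to define $t_j$. I would also need to address why evaluating at the centroid-shifted origin is legitimate, i.e.\ that fitting a pure quadratic with no linear or constant term does not bias the Hessian estimate — this follows because the tangent space was extracted precisely as the span of the top eigenvectors of $\mathbf{Y}^\top\mathbf{Y}$, so the data already have their dominant linear variation removed in those coordinates.
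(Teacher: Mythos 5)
Your proof is correct and follows essentially the same route as the paper: both cast the fit as a linear least-squares problem in the vectorized $\boldsymbol{\Theta}$ (the paper via a trace formulation with an explicit Kronecker design matrix $\mathbf{O}^{(i)}$, you via design vectors $\mathbf{a}_j$, which is the same algebra), solve the normal equations to obtain $\boldsymbol{\Theta} = 2\,\mathrm{Mat}(\mathbf{Q}^{-1}\mathbf{p})$, and identify the curvature with $\det(\boldsymbol{\Theta})$. Your caveat about the constant is warranted---$\det(2M) = 2^{d'-1}\det M$ rather than $2\det M$, and the paper is itself inconsistent on this scalar across the main text, appendix restatement, and proof conclusion---while your Monge-patch/Gauss--Kronecker discussion makes explicit the geometric identification that the paper only cites.
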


\begin{proof}
Our goal is to determine the parameter $\mathbf{\Theta}$ that defines the quadratic hypersurface in the tangent space. The Gaussian curvature of this hypersurface can then be expressed as the determinant of the Hessian matrix associated with the quadratic hypersurface, i.e., $\det(\mathbf{\Theta})$~\cite{ma2023curvature}. Our objective function is:
\begin{equation}
\label{eq_target}
    E(\mathbf{\Theta}) =\min_{\boldsymbol{\theta}}\sum_{j=1}^{k}(\frac{1}{2}\mathbf{o}_{j}^{\top}\boldsymbol{\theta}\mathbf{o}_{j}-t_{j})^{2}.
\end{equation}
$E(\mathbf{\Theta})$ quantifies the discrepancy between the predicted value and the target (\textit{image in a two-dimensional space, the tangent space is a plane, and the target value to be fitted corresponds to the projection of the node onto the normal vector}). 
To simply Eq.~\ref{eq_target}, we perform an equivalent transformation to express the objective function in terms of matrix traces:
\begin{equation}
    E(\mathbf{\Theta}) = E(\mathbf{\hat{\Theta}}) = \text{tr}\left[(\frac{1}2{}\mathbf{\mathbf{O}^{(i)}}\mathbf{\hat{\Theta}}-\mathbf{T})^{\top}(\frac{1}{2}\mathbf{\mathbf{O}^{(i)}}\mathbf{\hat{\Theta}}-\mathbf{T})\right],
\end{equation}
where $\text{tr}(\cdot)$ is the trace operator, $\mathbf{T}\in \mathbb{R}^{k\times 1}$ is the target vector:
\begin{equation}
    \mathbf{T} = \left[t_{1}, t_{2}, \dots, t_{k}\right],
\end{equation}
$\mathbf{Q}\in \mathbb{R}^{k\times (d^{\prime}-1)^{2}}$ is defined based on the Kronecker product:
\begin{equation}
    \mathbf{O}^{(i)}=
    \begin{bmatrix}
        &\mathbf{o}_{11}\cdot\mathbf{o}_{11},
        &\cdots,
        &\mathbf{o}_{11}\cdot\mathbf{o}_{1d^{\prime}-1},
        &\cdots,
        &\mathbf{o}_{1d^{\prime}-1}\cdot\mathbf{o}_{11},
        &\cdots,
        &\mathbf{o}_{1d^{\prime}-1}\cdot\mathbf{o}_{1d^{\prime}-1}
        \\
        &\mathbf{o}_{21}\cdot\mathbf{o}_{21},
        &\cdots,
        &\mathbf{o}_{21}\cdot\mathbf{o}_{2d^{\prime}-1},
        &\cdots,
        &\mathbf{o}_{2d^{\prime}-1}\cdot\mathbf{o}_{21},
        &\cdots,
        &\mathbf{o}_{2d^{\prime}-1}\cdot\mathbf{o}_{2d^{\prime}-1}
        \\
        &\vdots
        &\vdots
        &\vdots
        &\vdots
        &\vdots
        &\vdots
        &\vdots
        \\
        &\mathbf{o}_{k1}\cdot\mathbf{o}_{k1},
        &\cdots,
        &\mathbf{o}_{k1}\cdot\mathbf{o}_{km},
        &\cdots,
        &\mathbf{o}_{kd^{\prime}-1}\cdot\mathbf{o}_{k1},
        &\cdots,
        &\mathbf{o}_{kd^{\prime}-1}\cdot\mathbf{o}_{kd^{\prime}-1}
    \end{bmatrix},
\end{equation}
and $\hat{\mathbf{\Theta}}\in \mathbb{R}^{(d^{\prime}-1)^{2}}$ represents the vectorized form of the matrix $\mathbf{\Theta}$. Specifically, $\hat{\mathbf{\Theta}}$ is obtained by concating all rows of $\mathbf{\Theta}$ into a single vector, defined as:
\begin{equation}
    \hat{\mathbf{\Theta}}=[\mathbf{\Theta}_{1}, \mathbf{\Theta}_{2}, \dots, \mathbf{\Theta}_{m}]^{\top}
\end{equation}

By calculating $\nabla_{\mathbf{\hat{\Theta}}}E(\hat{\mathbf{\Theta}})$, we have:
\begin{align}
    \nabla_{\hat{\mathbf{\Theta}}}E(\hat{\mathbf{\Theta}}) &= \nabla_{\hat{\mathbf{\Theta}}} \text{tr}\left[(\frac{1}2{}\mathbf{\mathbf{O}^{(i)}}\mathbf{\hat{\Theta}}-\mathbf{T})^{\top}(\frac{1}{2}\mathbf{\mathbf{O}^{(i)}}\mathbf{\hat{\Theta}}-\mathbf{T})\right] \\
    &=\nabla_{\hat{\mathbf{\Theta}}} \text{tr}\left[\frac{1}{4}\mathbf{\hat{\Theta}}^{\top}(\mathbf{O}^{(i)})^{\top}\mathbf{O}^{(i)}\hat{\mathbf{\Theta}} -\frac{1}{2}\hat{\mathbf{\Theta}}^{\top}(\mathbf{O^{(i)}})^{\top}\mathbf{T} -\frac{1}{2}\mathbf{T}^{\top}\mathbf{O}^{(i)}\hat{\mathbf{\Theta}} + \mathbf{T}^{\top}\mathbf{T}\right] \\
    &=\frac{1}{4}\nabla_{\hat{\mathbf{\Theta}}} \text{tr}\left(\mathbf{\hat{\Theta}}^{\top}(\mathbf{O}^{(i)})^{\top}\mathbf{O}^{(i)}\hat{\mathbf{\Theta}}\right)-\frac{1}{2}\nabla_{\hat{\mathbf{\Theta}}}\text{tr}(\hat{\mathbf{\Theta}}^{\top}(\mathbf{O^{(i)}})^{\top}\mathbf{T}) -\frac{1}{2}\nabla_{\hat{\mathbf{\Theta}}}\text{tr}(\mathbf{T}^{\top}\mathbf{O}^{(i)}\hat{\mathbf{\Theta}}) \\
    &=\frac{1}{4}\nabla_{\hat{\mathbf{\Theta}}} \text{tr}\left(\mathbf{\hat{\Theta}}^{\top}(\mathbf{O}^{(i)})^{\top}\mathbf{O}^{(i)}\hat{\mathbf{\Theta}}\right)-\nabla_{\hat{\mathbf{\Theta}}}\text{tr}(\hat{\mathbf{\Theta}}^{\top}(\mathbf{O^{(i)}})^{\top}\mathbf{T})
    \\
    &=\frac{1}{2} \mathbf{\hat{\Theta}}^{\top}(\mathbf{O}^{(i)})^{\top}\mathbf{O}^{(i)} - \mathbf{T}^{\top}\mathbf{O}^{(i)}.
\end{align}
By letting $\nabla_{\mathbf{\hat{\Theta}}}E(\hat{\Theta})=\mathbf{0}$, we can get:
\begin{equation}
\label{eq_eq0}
    \frac{1}{2} (\mathbf{O}^{(i)})^{\top}\mathbf{O}^{(i)}\mathbf{\hat{\Theta}} - (\mathbf{O}^{(i)})^{\top}\mathbf{T}=\mathbf{0}.
\end{equation}
Solving Eq.~\eqref{eq_eq0}, we can get the expression of $\mathbf{\hat{\Theta}}$:
\begin{equation}
\mathbf{\hat{\Theta}}=2((\mathbf{O}^{(i)})^{\top}\mathbf{O}^{(i)})^{-1}(\mathbf{O}^{(i)})^{\top}\mathbf{T}.
\end{equation}
Thus, we can get the Gaussian curvature of the manifold $\mathrm{M}((\mathcal{G}_{*}^{\prime})^{c})$ at node $i$ is given by $\mathrm{K}_{\mathrm{G}}(i) =\det(\mathbf{\Theta})= 2\mathrm{det}(\mathrm{Mat}(\mathbf{Q}^{-1}\mathbf{p}))$, where $\mathbf{Q}\in \mathbb{R}^{m^{2}\times m^{2}}$ is a fourth-order tensor expressed as a matrix with entries $\mathbf{Q}_{a,b,c,d}=\sum_{j=1}^{k}\mathbf{o}_{ja}\mathbf{o}_{jb}\mathbf{o}_{jc}\mathbf{o}_{jd}$. The vector $\mathbf{p}\in \mathbb{R}^{m^{2}\times 1}$ is a second-order tensor with entries $\mathbf{p}_{a,b}=\sum_{j=1}^{k}t_{j}\mathbf{o}_{ja}\mathbf{o}_{jb}$. The operator $\mathrm{Mat}(\cdot)$ reshapes $\mathbf{Q}^{-1}\mathbf{p}$ into an $m\times m$ matrix, and $\mathrm{det}(\cdot)$ denotes the determinant. This completes the proof.
\end{proof}

\section{Preliminary}
To ensure the completeness and self-contained nature of this paper, we provide an overview of preliminaries relevant to our work.
\subsection{Intrinsic Dimension}
The concept of intrinsic dimension (ID) was originally introduced by \cite{bennett1969intrinsic}, where it is defined as the number of free parameters required by a hypothetical signal generator to produce a close approximation for each signal in a given collection.
According to the low-dimensional manifold hypothesis \cite{gorban2018blessing,fefferman2016testing}, real-world datasets typically reside (approximately) on low-dimensional manifolds embedded within high-dimensional Euclidean spaces. Specifically, let $\mathbf{X} \in \mathbb{R}^{n \times d}$ represent a dataset embedded in a $d$-dimensional Euclidean space. The data points in $\mathbf{X}$ are assumed to be sampled from a manifold $\mathcal{M}$, which is embedded in $\mathbb{R}^d$, where the intrinsic dimension of the manifold satisfies $\text{dim}(\mathcal{M}) < d$. Here $\text{dim}(\mathcal{M})$ refers to the dimension of the manifold $\mathcal{M}$.

\subsection{Gaussian Curvature}
Gaussian curvature is a mathematical concept that quantifies how a surface curves at a given point. It is an intrinsic property of the surface, meaning it depends only on the distances measured along the surface and not on how the surface is embedded in higher-dimensional space. 
The Gaussian curvature $K$ of a smooth surface in three-dimensional space at a point is the product of the principal curvatures~\cite{porteous2001geometric}, $\kappa_{1}$ and $\kappa_{2}$, at the given point: $K=\kappa_{1}\kappa_{2}$. If the surface is represented locally as the graph of a function $f$ via the implicit function theorem, the Gaussian curvature at a point $p$ can be computed as the determinant of the Hessian matrix of $f$, which corresponds to the product of the eigenvalues of the Hessian~\cite{porteous2001geometric,ma2023curvature}.
Positive Gaussian curvature indicates that a surface curves in the same direction in all tangent directions at a given point, forming a dome-like shape. Negative Gaussian curvature, on the other hand, creates a saddle-like shape. The absolute value of the Gaussian curvature measures the degree of curvature at a point, regardless of its sign, indicating how strongly the surface bends. A larger absolute value reflects a sharper curvature, while a smaller value suggests gentler bending.

\subsection{Quadratic Hypersurface Fitting Process Explanation (Section 2)}

\begin{figure*}[!htbp]
\centering
\includegraphics[width=1.0\textwidth]{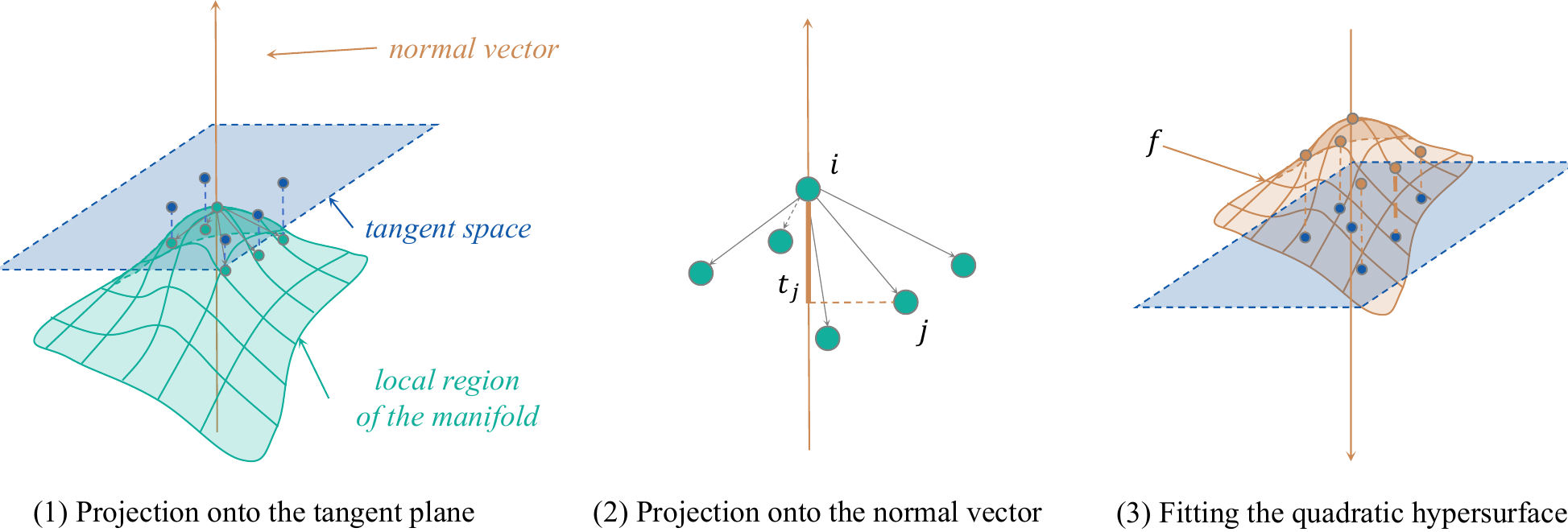}
\caption{Explanation of the Quadratic Hypersurface Fitting Process in Section~\ref{sec_curvature}}
\label{fig_curvature}
\end{figure*}

Here, we provide a detailed explanation of the quadratic hypersurface fitting process in Section~\ref{sec_curvature}.
After obtaining the normal vector $\mathbf{u}_{i}$ and the tangent space $\langle \boldsymbol{\xi}_{1},\dots,\boldsymbol{\xi}_{d_{*}^{\prime}-1} \rangle$. We first project the neighboring points of node $i$ onto its tangent space, as shown on the left side of Figure~\ref{fig_curvature}. Next, we fit a quadratic hypersurface using the projected points in the tangent space as input. To fit the hypersurface, we also need to know the output function values. We define the target values as the projections of neighboring nodes onto the normal vector at node 
$i$. This process is shown in the middle of Figure~\ref{fig_curvature}. As illustrated in the figure, the normal vector projections capture the local geometry by quantifying how each neighboring point deviates from the tangent space, ensuring that the fitted hypersurface accurately reflects the manifold’s curvature around node $i$. Finally, we fit the quadratic hypersurface by minimizing   $\min_{\boldsymbol{\Theta}}\sum_{j=1}^{k}(\frac{1}{2}\mathbf{o}_{j}^{\top}\boldsymbol{\Theta}\mathbf{o}_{j}-t_{j})^{2}$, as introduced in Proposition~\ref{prop_analytic_solutioin}, and this process is illustrated on the right side of Figure~\ref{fig_curvature}.
\section{Additional Experiment Details and Analysis}
\label{apendix_exp}

\subsection{Details of the Toy Experiments in Section~\ref{sec_intro}}
In this section, we detail the toy experiments conducted in section~\ref{sec_intro}. 

(1) For the experiments shown in Figure~\ref{fig_intro_attack}, we used the Cora dataset with a condensation ratio set to 2.60\%. The experiments involved three types of attacks: structure, feature, and label perturbations. For the structure attack, we employed the PRBCD algorithm~\cite{geisler2021robustness}, which introduces structural perturbations to the graph. In the feature attack, we randomly selected nodes from the training set and replaced their features with samples drawn from a normal distribution. For the label attack, we randomly selected a subset of nodes and uniformly flipped their labels to other classes. The attack budget was set to $p$\% of the total number of edges for structural perturbations and $p$\% of the number of training nodes for both feature and label perturbations.

(2) For the experiments presented in Figure~\ref{fig_intro_robust}, we also utilized the Cora dataset, maintaining a condensation ratio of 2.60\%. Our proposed \ModelName\ employed GCond~\cite{jin2021graph} as the backbone for graph condensation. The attack budgets for the feature, structure, and label perturbations were set to 10\%, 1\%, and 20\% of the corresponding components, respectively. 

(3) For the experiments presented in Figure~\ref{fig_intro_complexity}, we utilized the same dataset and attack budget settings as those used in the experiments shown in Figure~\ref{fig_intro_robust}. Here, we provide a detailed explanation of the three metrics~\cite{lorena2019complex} used in our analysis:
\begin{itemize}[leftmargin=*]
    \item Intrinsic Dimension(ID): We employ the MLE~\cite{levina2004maximum} intrinsic dimension (ID) estimator for our analysis. Specifically, for a given graph $\mathcal{G}=\{\mathbf{X,\mathbf{A},\mathbf{Y}}\}$ with $n$ nodes, we use the node embeddings derived after two rounds of message passing as the node representations. These embeddings capture both structural and feature information and are computed as  $\mathbf{Z}=\mathbf{A}^{2}\mathbf{X}$. The resulting set $\mathcal{Z}=\{\mathbf{Z}_{i}\}_{i=1}^{n}$ represents the node representations of the graph. The intrinsic dimension of $\mathcal{Z}$, which reflects the graph’s intrinsic dimension, is then estimated using the MLE approach as follows:
    \begin{equation}
    \label{eq_id_estimate}
        \widehat{\text{ID}(\mathcal{Z})} = -\frac{1}{n}\sum_{z\in\mathcal{Z}}\left(\frac{1}{k}\sum_{i=1}^{k}\log \frac{r_{i}(z)}{r_{k}(z)}\right)^{-1},
    \end{equation}
    where $k$ is a hyperparameter that determines the number of nearest neighbors of $z$ considered in the calculation, and we set $k=8$. The term $r_{i}(z)$ represents the distance between $z$ and its $i$-th nearest neighbor, while $r_{k}(z)$ denotes the distance between $z$ and its $k$-th nearest neighbor. For this computation, we employ the Euclidean distance metric to measure the proximity between points.

    \item Fisher's Discriminant Ratio (FDR): It measures the overlap between the values of the features in different classes and is given by:
    \begin{equation}
        F1 = \frac{1}{1 + \max_{i=1}^{m} r_{f_i}},
    \end{equation}
    where $r_{f_i}$ is the discriminant ratio for each feature $f_i$. Originally, FDR takes the highest value of $r_{f_i}$, meaning that at least one feature should separate the classes. This paper uses the inverse of this original formula, making the FDR values fall between $(0, 1)$, with higher values representing more complex problems where no single feature can discriminate the classes. 
    An common formula for $r_{f_i}$ in classification task is:
    \begin{equation}
        r_{f_i} = \frac{\sum_{j=1}^{n_c} n_{c_j} \left( \mu_{f_i}^{c_j} - \mu_{f_i} \right)^2}{\sum_{j=1}^{n_c} \sum_{i=1}^{n_{c_j}} \left( x_{i}^j - \mu_{f_i}^{c_j} \right)^2},
    \end{equation}
    where \( n_{c_j} \) is the number of examples in class \( c_j \), and \( x_{i}^j \) is the individual feature value for an example from class \( c_j \). \( p_{c_j} \) represents the proportion of examples in class \( c_j \), and \( \mu_{f_i}^{c_j} \) is the mean of feature \( f_i \) for class \( c_j \), with \( \sigma_{f_i}^{c_j} \) as the standard deviation. In our approach, similar to the estimation of intrinsic dimension, we consider the graph $\mathcal{G}$ as a set  $\mathcal{Z}$.

    \item Fraction of Hyperspheres Covering Data (FHC): A topological measure used to assess the coverage of a dataset by hyperspheres. This method builds hyperspheres centered at each data point and progressively increases their radius until they intersect with a data point from a different class. Smaller hyperspheres that are contained within larger ones are eliminated, and FHC is defined as the ratio between the number of remaining hyperspheres and the total number of examples in the dataset. The formula for FHC is given by:
    \begin{equation}
        \text{FHC} = \frac{\# \text{Hyperspheres}(T)}{n},
    \end{equation}
    where \( \# \text{Hyperspheres}(T) \) represents the number of hyperspheres needed to cover the dataset, and \( n \) is the total number of examples in the dataset.
    In this work, we follow the calculation method from Lorena et al. (2019). To determine the radius of a hypersphere for a given data point \( x_i \), we first calculate the distance matrix between all examples in the dataset. Specifically, the "nearest enemy" of a data point \( x_i \) refers to the closest point from a different class. The radius \( r_i \) of the hypersphere centered at \( x_i \) is then calculated as half of the distance to this nearest point from another class.
    To compute the radius for \( x_i \), we first measure the distance \( d_i \) from \( x_i \) to another data point \( x_j \), then identify the nearest point from a different class for both \( x_i \) and \( x_j \). If \( x_i \) is the nearest point from a different class to \( x_j \), the radius is set to half the distance \( d_i \). Otherwise, the radius is adjusted based on the distance between the nearest points of the two examples.
    Once the radii of all hyperspheres are computed, a post-processing step is used to identify which hyperspheres can be absorbed: those completely contained within larger hyperspheres. This ensures that only the most relevant hyperspheres are considered in the final FHC measure. 
    For the distance computation, we continue to use $\mathcal{Z}$, consistent with the approach used in the previous two metrics.
\end{itemize}

\subsection{Additional Experiments on Robust GNNs as GC Backbones}
To further investigate whether existing robust GNN technologies can enhance the robustness of graph condensation, we adopt more robust GNNs as the backbones of GCond. Specifically, we select GNNGuard~\cite{zhang2020gnnguard} and GIB~\cite{wu2020graph} as the backbone models for GCond. The results are shown in Figure~\ref{fig_robustgnn_apen}.
\begin{figure}
    \centering
    \includegraphics[width=0.6\linewidth]{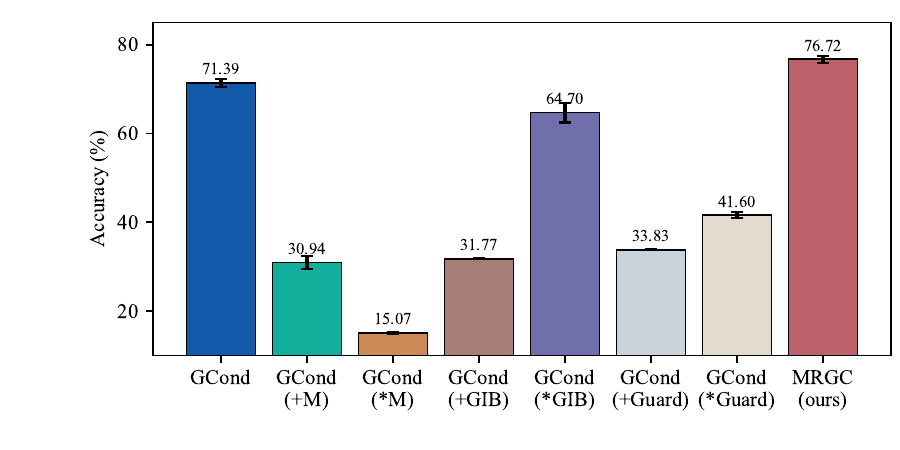}
    \caption{Additional experiments on robust GNNs as GC backbones. Here, (+) indicates that the robust GNNs serve as the backbone of GCond, while (*) signifies that the robust GNN is applied to the condensed graph, which is obtained using the standard GNNs. ``M'' denotes the MedianGCN~\cite{chen2021understanding} and ``Guard'' represents the GNNGuard~\cite{zhang2020gnnguard} .}
    \label{fig_robustgnn_apen}
\end{figure}

As shown in Figure~\ref{fig_robustgnn_apen}, GNNGuard and GIB do not enhance the robustness of GC. Moreover, as mentioned in the Introduction, MedianGCN also fails to improve robustness. This suggests that current robust GNN techniques may not effectively address the robustness of GC, indicating the need for novel solutions to tackle this issue.

\subsection{Dataset Details}
In our experiments, we use five real-world datasets: (1) Cora: Contains 2,708 publications classified into seven classes, with 5,429 citation links. Each publication is represented by a binary word vector indicating the presence or absence of 1,433 dictionary words.
(2) CiteSeer: Includes 3,312 publications classified into six classes, connected by 4,732 citation links. Each publication is described by a binary word vector over a dictionary of 3,703 words.
(3) PubMed: Comprises 19,717 diabetes-related publications classified into three classes, with 44,338 citation links. Each publication is represented by a TF/IDF-weighted word vector from a dictionary of 500 words.
(4) DBLP: Contains 17,716 papers, 105,734 citation links, and four labels.
(5) Ogbn-arxiv: A directed citation network of computer science papers on ArXiv. 

The train/val/test split for the Cora, CiteSeer, PubMed, and Ogbn-arxiv datasets follows the same configuration as the benchmark in~\cite{gong2024gc4nc}. For the DBLP dataset, we use the settings from~\cite{huang2021scaling}, performing random splits with 20 labeled nodes per class for training, 30 per class for validation, and the remaining nodes for testing. 

In our experiments, we use the transuctive settings. The condensation ratios for the datasets are configured as follows: For Cora, the ratios are set to 1.30\%, 2.60\%, and 5.20\%. For CiteSeer, they are 0.90\%, 1.80\%, and 3.60\%. In the case of PubMed, the ratios are 0.08\%, 0.15\%, and 0.30\%. Similarly, for DBLP, the values are 0.11\%, 0.23\%, and 0.45\%. Lastly, for Ogbn-arxiv, the condensation ratios are set to 0.05\%, 0.25\%, and 0.50\%. The condensation ratio refers to the proportion of the number of nodes in the condensed graph to the total number of nodes in the training graph. The statistical information of the datasets is presented in Table~\ref{table_dataset}. 
\begin{table*}[!h]
  \centering
  \caption{The statistics information of datasets.}
  \label{table_dataset}
  \tabcolsep=0.1cm
  \resizebox{0.6\linewidth}{!}{ 
   \begin{tabular}{lrrrrr}
    \toprule
    Dataset & \#Nodes & \#Edges & \#Classes & \#Features & Train/Val/Test \\
    \midrule
    Cora  & 2,708 & 5429  & 7     & 1,433 & 140/500/1,000 \\
    CiteSeer & 3,327 & 4732  & 6     & 3,703 & 120/500/1,000 \\
    PubMed & 19,717 & 88,648 & 3     & 500   & 60/500/1,000 \\
    DBLP  & 17,716 & 105,734 & 4     & 1,639 & 80/120/17,516 \\
    Ogbn-arxiv & 169,343 & 1,166,243 & 40    & 128   & 90,941/29,799/48,603 \\
    \bottomrule
    \end{tabular}%
}
\vspace{-1em}
\end{table*}

\subsection{Baseline Details}
To better evaluate the robustness of our proposed \ModelName, we adopt five types of baselines: 

(1) Gradient-matching-based methods.
\begin{itemize}[leftmargin=*]
    \item GCond~\cite{jin2021graph}: The first gradient matching method that aligns model gradients derived from the training and condensed graphs, ensuring the condensed graph effectively preserves the knowledge for GNN training.
    \item SGDD~\cite{yang2023does}: It addresses the issue where GC methods often overlook the impact of structural information from the original graphs. Empirically and theoretically, SGDD demonstrates that synthetic graphs generated by its method are expected to have smaller LED shifts compared to previous works.
\end{itemize}
(2) Trajectory-matching-based methods.
\begin{itemize}[leftmargin=*]
    \item SFGC~\cite{zheng2024structure}: A structure-free method that aligns model learning behaviors by aligning the training trajectories of parameter distributions in expert GNNs.
    \item GEOM~\cite{zhang2024navigating}: It employs a curriculum learning strategy to train expert trajectories with more diverse supervision signals derived from the original graph.
\end{itemize}
(3) Distribution-matching-based methods.
\begin{itemize}[leftmargin=*]
    \item GCDM~\cite{liu2022graph}: It minimizes the discrepancy between node distributions of the training and condensed graphs in feature space.
\end{itemize}
(4) Distribution-matching-based methods. To better highlight the robustness improvement achieved by \ModelName, we adopt three state-of-the-art graph denoising methods before applying GC the same as~\cite{gao2024robgc}. We choose GCond as the GC backbone.
\begin{itemize}[leftmargin=*]
    \item GCond(+S)~\cite{jin2020graph}: This method decomposes the noisy graph through Singular Value Decomposition (SVD) and applies low-rank approximation to denoise the graph, reducing the influence of potential high-rank noise.
    \item GCond(+J)~\cite{jin2020graph}: This method computes the Jaccard similarity for every pair of connected nodes and discards edges whose similarity is below a specific threshold. In our experiments, we set the threshold to 0.01.
    \item GCond(+K)~\cite{jin2021node}: It identifies the k-nearest neighbors for each node based on their features and incorporates potentially effective edges into the original graph. In our experiments, we set $k=3$
\end{itemize}
(5) Robust Graph Condensation method.
\begin{itemize}[leftmargin=*]
    \item RobGC~\cite{gao2024robgc}: The first proposed robust graph condensation method. It is a plug-and-play defense approach that integrates the structure learning process into the GC process, using the condensed graph as a supervised signal to optimize the original graph structure.
\end{itemize}

\subsection{Implement Details}
Our proposed \ModelName\ is a plug-and-play graph condensation method, with GCond~\cite{jin2021graph} selected as the GC backbone in our experiments. The hyperparameter settings for each dataset are shown in Table~\ref{table_hyperparameter}.

\begin{table*}[!h]
  \centering
  \caption{Hyperparameters Setting.}
  \label{table_hyperparameter}
  \tabcolsep=0.1cm
  \resizebox{0.4\linewidth}{!}{ 
   \begin{tabular}{c|c|cccccc}
    \toprule
    Dataset & Ratio & $\alpha$ & $\beta$  & $\gamma$ & $k$     & lr(feat) & lr(adj) \\
    \midrule
    \multirow{3}[2]{*}{Cora} & 1.30\% & 1e2   & 1e-1  & 1e0   & 3     & 0.01  & 0.0001 \\
          & 2.60\% & 5e1   & 1e-1  & 1e0   & 3     & 0.01  & 0.0001 \\
          & 5.20\% & 5e1   & 1e-2  & 1e-1  & 3     & 0.01  & 0.0001 \\
    \midrule
    \multirow{3}[2]{*}{CiteSeer} & 0.90\% & 1e0   & 5e-1  & 5e-2  & 3     & 0.01  & 0.0001 \\
          & 1.80\% & 1e0   & 5e-2  & 5e-2  & 5     & 0.01  & 0.0001 \\
          & 3.60\% & 1e0   & 5e-2  & 5e-2  & 3     & 0.01  & 0.0001 \\
    \midrule
    \multirow{3}[2]{*}{PubMed} & 0.08\% & 5e0   & 1e-1  & 1e-1  & 3     & 0.0001 & 0.0001 \\
          & 0.15\% & 5e0   & 1e-3  & 1e-3  & 5     & 0.0001 & 0.0001 \\
          & 0.30\% & 5e1   & 1e-3  & 1e-2  & 3     & 0.0001 & 0.0001 \\
    \midrule
    \multirow{3}[2]{*}{DBLP} & 0.11\% & 5e1   & 1e-1  & 1e0   & 3     & 0.01  & 0.0001 \\
          & 0.23\% & 1e0   & 5e-2  & 1e0   & 3     & 0.01  & 0.0001 \\
          & 0.45\% & 1e1   & 5e-1  & 1e-1  & 3     & 0.01  & 0.0001 \\
    \midrule
    \multirow{3}[2]{*}{Ogbn-arxiv} & 0.01\% & 5e-3  & 1e-9  & 1e-4  & 2     & 0.005 & 0.005 \\
          & 0.05\% & 5e-3  & 1e-9  & 1e-4  & 3     & 0.005 & 0.005 \\
          & 0.50\% & 5e-3  & 1e-9  & 1e-4  & 3     & 0.005 & 0.005 \\
    \bottomrule
    \end{tabular}%
}
\end{table*}

\subsection{Additional Hyperparameter Sensitivity Study}
The sensitivity of additional hyperparameters on the PubMed and DBLP datasets, with a condensation ratio of 0.90\% and 0.11\%, is shown in Figure~\ref{fig_appendix_hyperparameter}.

\begin{figure*}[!t]
    \begin{minipage}[t]{0.33\textwidth}
        \vspace{0pt}
        \centering
        \includegraphics[width=\textwidth]{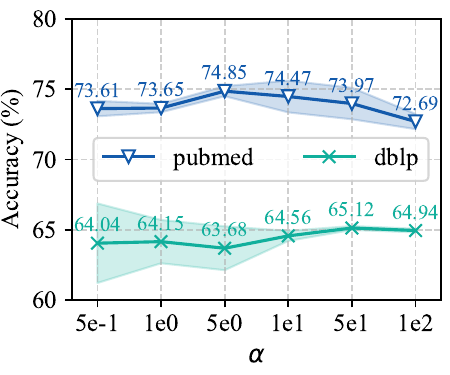}
    \end{minipage}
    \begin{minipage}[t]{0.33\textwidth}
        \vspace{0pt}
        \centering
        \includegraphics[width=\textwidth]{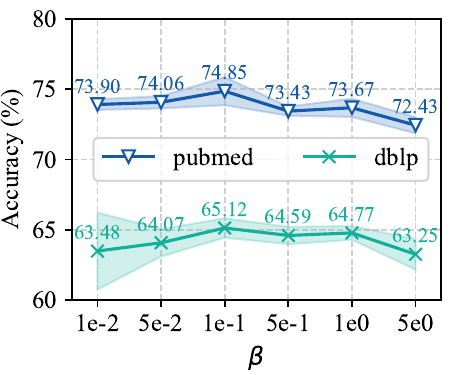}
        \vspace{-2em}
        \caption{Additional Hyperparameters study.}
        \label{fig_appendix_hyperparameter}
    \end{minipage}
    \begin{minipage}[t]{0.33\textwidth}
        \vspace{0pt}
        \centering
        \includegraphics[width=\textwidth]{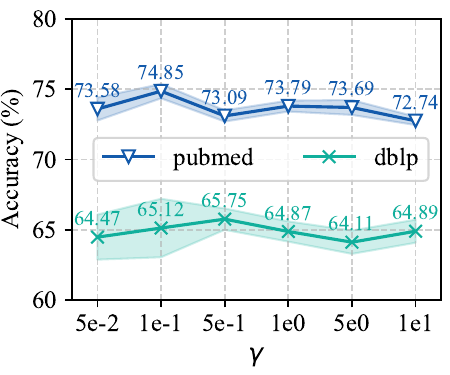}
    \end{minipage}
\vspace{-1em}
\end{figure*}

As observed in the DBLP and PubMed datasets, when the hyperparameters are set to suitable values, our \ModelName\ achieves its best performance. Moreover, a wide range of hyperparameter settings still results in consistently superior performance.

\section{Algorithm}
\label{apen_algorithm}

\subsection{Detailed Complexity Analysis}
Here we detailed analysis of the time complexity of our proposed \ModelName.
\begin{itemize}[leftmargin=*]
    \item For the Intrinsic Dimension Manifold Regularization Module, solving the Laplacian approximation requires a computational complexity of $\mathcal{O}(n^{\prime}d^{\prime})$, where $n^{\prime}$ denotes the number of nodes in the condensed graph. Evaluating the volume $|\mathcal{M}(\mathcal{G}^{\prime})|$ involves computing a determinant, which has a complexity of $\mathcal{O}((d^{\prime})^{3})$, corresponding to the $d^{\prime}$-dimensional covariance matrix. So the total time complexity for this module is $\mathcal{O}(n^{\prime}d^{\prime}+(d^{\prime})^{3})$.
    \item For the Curvature-Aware Manifold Smoothing Module, the process begins by computing the normal vector and tangent space for each point. This requires solving the eigenvalue decomposition of the matrix $\mathbf{Y}^{\top}\mathbf{Y} \in \mathbb{R}^{d^{\prime} \times d^{\prime}}$, where $\mathbf{Y}$ is constructed from the $k$ nearest neighbors. The time complexity for this step is $\mathcal{O}(k + (d^{\prime})^3)$.
    Next, we fit a quadratic hypersurface to obtain the Gaussian curvature, which involves solving the analytic solution in Proposition~\ref{prop_analytic_solutioin}. Specifically, this step requires computing the matrix inversion of a $(d^{\prime} - 1)^2 \times (d^{\prime} - 1)^2$ matrix $\mathbf{Q}$ and the determinant of the matrix $\text{Mat}(\mathbf{Q^{-1}p}) \in \mathbb{R}^{(d^{\prime} - 1) \times (d^{\prime} - 1)}$. The time complexity for this is $\mathcal{O}((d^{\prime} - 1)^6 + (d^{\prime} - 1)^3)$. This calculation must be performed for each node, resulting in a time complexity of $\mathcal{O}(n^{\prime}((d^{\prime} - 1)^6 + k + (d^{\prime})^3 + (d^{\prime} - 1)^3))$.
    Finally, we compute the Ricci curvature for each node, with a time complexity of $\mathcal{O}((n^{\prime})^2)$. Combining these, the total time complexity is $\mathcal{O}(n^{\prime}((d^{\prime} - 1)^6 + k + (d^{\prime})^3 + (d^{\prime} - 1)^3 + n^{\prime}))$. After simplification, this aproximates to $\mathcal{O}(n^{\prime}((d^{\prime})^6 + k + n^{\prime}))$.
    \item For the Class-Wise Manifolds Decoupling Module, we estimate the volume of the manifold for each class as well as the total graph data manifold. The computational complexity of this module is $\mathcal{O}(c(d^{\prime})^3)$, where $c$ represents the total number of classes in the graph data, and $d^{\prime}$ is the dimensionality of the feature space.
\end{itemize}
The total time complexity of our proposed \ModelName\ is given by: $\mathcal{O}(n^{\prime}d^{\prime}+(d^{\prime})^{3}+n^{\prime}((d^{\prime}-1)^{6}+k+n^{\prime})+c(d^{\prime})^{3})$. After simplification, the total time complexity of \ModelName\ is approximately to $\mathcal{O}(n^{\prime}(d^{\prime})^{6}+(n^{\prime})^{2})$. \textit{However, due to the definition of GC, the number of nodes in the condensed graph, $n^{\prime}$, is significantly smaller than in the original graph, making $n^{\prime}$ a relatively small number. Furthermore, prior to applying our \ModelName, we use PCA~\cite{dunteman1989principal} to reduce the dimensionality of the condensed graph, which ensures that $d^{\prime}$ is also kept small. As a result, the actual execution complexity of our approach remains efficient in practice despite the theoretical time complexity.}

\subsection{Detailed Training Pipeline}
Here, we detail the overall training pipeline of our proposed \ModelName\. The overall training pipeline of \ModelName\ is outlined in Algorithm~\ref{alg_training}. The inputs include the attacked training graph $\hat{\mathcal{G}}$, the hyperparameters $\alpha$, $\beta$, $\gamma, k$, and additional hyperparameters determined by the chosen graph condensation backbone. 
Initially, the node features of the condensed graph are initialized by randomly selecting non-outlier nodes from the original graph that belong to the same class. Outliers are identified based on the Euclidean distance of their features. Let the average distance be denoted as $\mu_d$ and the variance as $\sigma_d^2$. Nodes with a distance to other nodes of the same class greater than $\mu_d + 2\sigma_d$ are classified as outliers. The labels and adjacency matrix of the condensed graph are consistent with those used in~\cite{jin2021graph}. 
We obtain the node representations by the node features after two rounds of message passing like $\mathbf{Z}=(\mathbf{A}^{\prime})^{2}\mathbf{X}$. 
Then we adopt the PCA~\cite{dunteman1989principal} to reduce the dimensionality of the node representation to get better performance while significantly reducing the excitation time.
Next, we compute the Intrinsic Dimension Manifold Regularization term, calculate the Curvature-Aware Manifold Smoothing Module, and process the Class-Wise Manifold Decoupling Module. These steps yield three manifold regularization terms: $\mathcal{L}_{\text{dim}}$, $\mathcal{L}_{\text{cur}}$, and $\mathcal{L}_{\text{sep}}$. 
It is worth noting that our proposed \ModelName\ can be integrated with any training-based GC methods. Consequently, the $\mathcal{L}_{\text{GC}}$ is defined by the specific GC backbone it employs. 
Then, the total training loss is: $\mathcal{L}=\mathcal{L}_{\text{GC}}+\alpha\mathcal{L}_{\text{dim}}+\beta\mathcal{L}_{\text{cur}}+\gamma\mathcal{L}_{\text{sep}}$. Subsequently, the condensed graph is optimized by minimizing $\mathcal{L}$. 

\begin{algorithm}
\caption{The overall training pipeline of \ModelName.}
\begin{algorithmic}
\label{alg_training}
\REQUIRE Attacked training graph $\hat{\mathcal{G}}$, hyperparameters $\alpha,\beta,\gamma,k$.
\ENSURE Condensed graph $\mathcal{G}^{\prime}$
\FOR{$\text{epoch} = 1,\dots,E$}
    \STATE Initialize the node features, adjacency matrix, and labels of $\mathcal{G}^{\prime}$.
    \STATE $\mathcal{L}=0$
    \STATE Obtain the standard graph condensation loss $\mathcal{L}_{GC}$
    \STATE $\mathcal{L}\leftarrow \mathcal{L}+\mathcal{L}_{GC}$
    \STATE Calculate the node representation $\mathbf{Z}^{\prime} = (\mathbf{A}^{\prime})^{2}\mathbf{X}^{\prime}$
    \STATE Dimensionality reduction of the node representation using PCA:  $\mathbf{Z}^{\prime}=\text{PCA}(\mathbf{Z}^{\prime})$
    \FOR{$c=1,\dots,C$}
        \STATE \textit{/*Intrinsic Dimension Manifold Regularization*/} 
        \STATE Compute $\mathcal{L}_{dim}$ by Eq.~\eqref{eq_loss_dimension} 
        \STATE \textit{/*Curvature-Aware Manifold Smoothing*/} 
        \STATE Compute $\mathcal{L}_{cur}$ by Eq.~\eqref{eq_loss_curvature}
        \STATE $\mathcal{L}\leftarrow \mathcal{L}+\left( \alpha \mathcal{L}_{dim} + \beta \mathcal{L}_{cur}\right)$
    \ENDFOR
    \STATE \textit{/*Class-Wise Manifold Decoupling*/} 
    \STATE Compute $\mathcal{L}_{sep}$ by Eq.~\eqref{eq_loss_seperation}
    \STATE $\mathcal{L}\leftarrow \mathcal{L}+\gamma\mathcal{L}_{sep}$
    
    \STATE Optimize the condensed graph to minimize $\mathcal{L}$
\ENDFOR
\end{algorithmic}
\end{algorithm}

\section{Detailed Related Work}
\label{apendix_related_work}
Here, we provide a detailed discussion of the related work, which we briefly summarized in Section~\ref{sec_related}.

\textbf{Graph Condensation.} Graph condensation~\cite{jin2021graph} was proposed to enhance the training efficiency and scalability of Graph Neural Networks by synthesizing a much smaller yet highly informative graph. The smaller graph retains the ability to train GNNs effectively, achieving performance comparable to that of GNNs trained on the original, much larger graph. The existing graph condensation studies can be roughly categorized into four lines:
\begin{itemize}
    \item \textit{Gradient-Maching-Based Methods:} The gradient-matching-based methods were introduced in GCond~\cite{jin2021graph}, the first graph condensation method. These methods match the gradient information of the same GNN trained on both the original, larger graph and the condensed, smaller graph, with the goal of minimizing the gradient differences at each training step. However, the standard GCond method ignores the structural information linking the original and condensed graphs. To address this issue, SGDD~\cite{yang2023does} applies optimal transport theory and designs a method to transfer structural information from the original graph to the condensed graph, thereby achieving smaller shifts in Laplacian Energy Distribution (LED). Furthermore, ~\cite{jin2022condensing} demonstrates both experimentally and theoretically that one-step gradient matching (i.e., not using the inner loop in gradient matching) can still yield excellent results and significantly improve condensation efficiency. Additionally, MSGC~\cite{gao2023multiple} also uses pre-defined structures that allow each condensed node to capture distinct neighborhoods, thereby improving the graph condensation process through better graph structure construction. 
    \item \textit{Trajectory-Matching-Based Methods.} Trajectory-Matching-Based Methods train two separate GNN models on the condensed and original graphs, respectively, and minimize the discrepancy in the training trajectories (i.e., the variation in model parameters) between the final points of these two training trajectories. Unlike gradient-matching-based methods, trajectory-matching-based methods are a multi-step matching approach~\cite{gao2024graph} and often yield better results~\cite{sun2024gc,gong2024gc4nc,liu2024gcondenser}. SFGC~\cite{zheng2024structure} is the first trajectory-matching-based method in graph condensation. It proposes aligning the long-term GNN learning behaviors between the original and condensed graphs, achieving promising results even when the condensed graph has no structural information, thus enabling structure-free graph condensation. Furthermore, GEOM~\cite{zhang2024navigating} recognizes the limitations of supervision in trajectory matching and highlights that challenging nodes primarily cause the performance gap in GNNs trained on condensed graphs. To tackle this issue, GEOM assesses these difficult nodes and employs curriculum learning to adjust the matching window size during training dynamically.
    \item \textit{Distribution-Matching-Based Methods.} Distribution-matching-based methods aim to minimize the discrepancy in graph statistic distributions between the condensed and original graphs. To be specific~\cite{gao2024graph}, GCDM~\cite{liu2022graph}, CaT~\cite{liu2023cat}, and PUMA~\cite{liu2023puma} calculate the node distributions in the shared feature space for both the original and condensed graphs, and they optimize by minimizing the maximum mean discrepancy between the corresponding class distributions. Besides this, GDEM~\cite{liu2023dream} works by aligning the eigenbasis as well as the node features of both the real and synthetic graphs, which helps in reducing the spectrum bias that is typically caused in the synthetic graph.
    \item \textit{Others.} There are also many different graph condensation methods. For instance, to reduce the computational burden in the inner optimization of gradient-matching-based methods, GCSNTK~\cite{wang2024fast} integrates the Graph Neural Tangent Kernel within the Kernel Ridge Regression framework as an alternative. Additionally, KiDD~\cite{xu2023kernel} leverages Kernel Ridge Regression by eliminating the non-linear activation function.
\end{itemize}

\textbf{Robust Graph Learning.} \textls[-30]{Although Graph Neural Networks have shown promising results, numerous studies~\cite{sun2022adversarial,wu2019adversarial,jin2020graph} indicate that they are vulnerable to adversarial attacks, where even small, imperceptible perturbations in the graph can significantly degrade their performance. Fortunately, several lines of research have shown promising results in enhancing the robustness of Graph Neural Networks, which can be broadly categorized into three main approaches:}
\begin{itemize}
    \item \textit{Preprocessing-Based Methods.} Preprocessing-based methods aim to denoise the graph before training or inference. For example, GCN-Jaccard~\cite{wu2019adversarial} uses the Jaccard similarity between node feature pairs, removing edges from node pairs whose similarity falls below a predefined threshold. GCN-SVD~\cite{entezari2020all} is based on the widely accepted assumption that real-world graphs often exhibit low-rank properties. It leverages Singular Value Decomposition (SVD) to decompose the adjacency matrix and discard the low singular values. GCN-KNN~\cite{jin2020graph} connects each node to its $k$ most similar neighbors, adding an edge between them if one does not already exist.
    
    \item \textit{Modeling-Based Methods.} \textls[-20]{Modeling-based methods defend against adversarial attacks by modifying the model architecture. 
    RGCN~\cite{zhu2019robust} uses Gaussian distributions as the hidden representations of nodes and mitigates the effects of adversarial changes by absorbing them into the variance of the Gaussian distribution.
    Pro-GNN~\cite{jin2020graph}, inspired by the well-known properties of low-rank, sparsity, and feature smoothness, uses structure learning to dynamically adjust the graph structure to enhance GNN robustness. GNNGuard~\cite{zhang2020gnnguard} learns how to assign higher weights to edges connecting similar nodes while pruning edges between unrelated nodes. 
    HANG~\cite{zhao2024adversarial} advocates for the use of conservative Hamiltonian neural flows in constructing GNN to improve its robustness.}

    \item \textit{Training-Based Methods.}Training-based methods do not alter the model architecture or the graph; instead, they employ specially designed training strategies to make the model resistant to adversarial attacks.
    \cite{feng2019graph, gosch2024adversarial,li2022spectral} apply adversarial training to reduce the sensitivity of Graph Neural Networks to adversarial perturbations.
    SG-GSR~\cite{in2024self} design a novel group-training strategy to address the loss of structural information and the issue of imbalanced node degree distribution. 
\end{itemize}

\textbf{Robust Graph Condensation.} Early research work~\cite{wu2024understanding} observed that traditional graph reduction methods faced significant difficulties in effectively defending against PGD~\cite{xu2019topology} attacks. As graph condensation has gained recognition as a more effective and promising graph reduction technique~\cite{gao2024graph, xu2024survey} and has been seen as a solution to improve GNNs training efficiency, RobGC~\cite{gao2024robgc} became the first study to delve into the robustness of GC in the face of adversarial attacks. This work introduced a defense mechanism that was specifically designed to address and counteract structure-based attacks. Furthermore, benchmark~\cite{gong2024gc4nc} demonstrated that feature noise, which has been a major challenge in Graph Neural Network models, presents even greater difficulties when applied to GC. This highlighted the importance of considering multiple sources of noise and their impact on the effectiveness of GC.

\clearpage

\end{document}